\newcommand{\matindex}[1]{\mbox{\scriptsize#1}}
\newtheorem{theorem}{Theorem}[section]
\newtheorem{lemma}[theorem]{Lemma}
\newtheorem{definition}[theorem]{Definition}
\def \Pa {\mathrm{PA}}
\def \pa {\mathrm{pa}}
\def \SPA {\mathrm{SPA}}
\def \spa {\mathrm{spa}}
\def \nw {n_\mathrm{w}}
\def \bs {b_\mathrm{st}}
\def \aw {a_\mathrm{w}}
\def \ns {n_\mathrm{st}}
\def \tsub {T_\mathrm{sub}}
\newcommand{\tub}{\tau_\text{ub}}
\newcommand{\Xb}{\mathbf{X}}
\newcommand{\indep }{\perp\!\!\!\perp}
\def \pind {\mathrm{pInd}}
\newcommand{\appendixhead}%
{\textbf{\huge Appendix}}
\newcommand{\aistatsappendixtitle}[1]{%
  \hsize\textwidth
  \linewidth\hsize
  \toptitlebar{\centering{\Large\bfseries #1 \par}}
  \bottomtitlebar
}
\begin{document}

\twocolumn[
\aistatstitle{Causal Discovery-Driven Change Point Detection in Time Series}

\aistatsauthor{Shanyun Gao \And Raghavendra Addanki \And  Tong Yu\And Ryan A. Rossi \And Murat Kocaoglu} 

\aistatsaddress{Purdue University \And  Adobe Research \And Adobe Research  \And Adobe Research  \And Purdue University }]

\begin{abstract}
Change point detection in time series aims to identify moments when the probability distribution of time series changes. It is widely applied in many areas, such as human activity sensing and medical science. In the context of multivariate time series, this typically involves examining the joint distribution of multiple variables: If the distribution of any one variable changes, the entire time series undergoes a distribution shift. However, in practical applications, we may be interested only in certain components of the time series, exploring abrupt changes in their distributions while accounting for the presence of other components. Here, assuming an underlying structural causal model that governs the time-series data generation, we address this task by proposing a two-stage non-parametric algorithm that first learns parts of the causal structure through constraint-based discovery methods, and then employs conditional relative Pearson divergence estimation to identify the change points. The conditional relative Pearson divergence quantifies the distribution difference between consecutive segments in the time series, while the causal discovery method allows a focus on the causal mechanism, facilitating access to independent and identically distributed (IID) samples. Theoretically, the typical assumption of samples being IID in conventional change point detection methods can be relaxed based on the Causal Markov Condition. Through experiments on both synthetic and real-world datasets, we validate the correctness and utility of our approach.

\end{abstract}

\section{INTRODUCTION}\label{sec:intro}
Change point analysis aims to detect distribution shifts in observational time series data. This topic has been explored extensively and is popular in many areas, such as human activity analysis \citep{brahim2004gaussian,cleland2014evaluation}, image analysis \citep{radke2005image} and financial markets \citep{talih2005structural}. 

Traditionally, change point methods focus on detecting shifts in the \textbf{joint distribution} of all variables in the time series. There is an implicit assumption that the entire joint distribution of all the variables significantly shifts whenever a change occurs in any of the variables in the multivariate time series. This assumption works well if we only care about broad, overarching changes, but it can be restrictive and overlook an important real-world consideration: in many scenarios, we are more concerned with local changes rather than global shifts. Motivated by the causal invariance principle and invoking a causal modeling perspective, we are interested in detecting changes that specifically affect the \textbf{causal mechanisms} governing the variables in the time series.


In the field of human health and medicine, researchers aim at identifying subtle changes in specific patient conditions before the overall health deteriorates significantly, such as capturing antecedent signs and symptoms of sepsis in \citep{shashikumar2017early} and \citep{goh2021artificial}. Similarly, detecting signal changes within vast datasets to predict anomalies before the onset of overall financial distress is a critical focus in finance, as demonstrated in \citep{koyuncugil2012financial} and \citep{kliestik2018bankruptcy}. Therefore, it is essential to shift our focus from global joint distribution changes to causal mechanism changes in practical applications.


Additionally, from a theoretical perspective, many existing methods, including  ~\citep{aminikhanghahi2017survey,harchaoui2008kernel,liu2013change,saggioro2020reconstructing}, require independent and identically distributed (IID) samples. Although many of these methods are robust 
to non-IID samples in simulated data~
such as \citep{liu2013change}, they lack theoretical guarantees, which are needed for trustworthy deployment of these algorithms in the real world, especially in safety-critical applications as in~\citep{liu2018change}.

Driven by both practical needs and theoretical importance, we propose a novel change point detection algorithm 
that integrates change point detection with causal discovery.
First, causal structure helps us obtain a more fine-grained look at the joint distribution by 
disentangling the causal mechanism of each component in the time series. 
Second, relying on the Causal Markov Condition assumption in the Structural Causal Model (SCM) framework of~\citep{pearl2009causality}, the correlated samples become independent and identically distributed (IID) when conditioned on their causal parents. 
In this context, our change point detection method aims to identify shifts in the \textbf{causal mechanisms}, specifically, the conditional distribution of the target variable given its parents. 


In this paper, we summarize our main contributions as follows:
\begin{itemize}
    \item We propose a novel non-parametric algorithm, called Causal-RuLSIF,  for detecting change points in causal mechanisms within discrete-valued time series data. The algorithm assumes an underlying \emph{Mechanism-Shift} SCM but does not impose constraints on the causal mechanisms or the data distributions. Our algorithm introduces a novel dynamic RuLSIF estimator for detecting change points, on the basis of \citep{yamada2013relative} and \citep{liu2013change}, and integrates the PCMCI algorithm for causal discovery from \citep{runge2019detecting}. 
    \item We validate our method with synthetic simulations on both \emph{soft mechanism change} and \emph{hard mechanism change} time series, demonstrating that it reliably detects change points with high probability. We also employ our method in an air pollution application. 
\end{itemize}

      \begin{figure*}[t]
        \centering
        \includegraphics[width=1\linewidth]{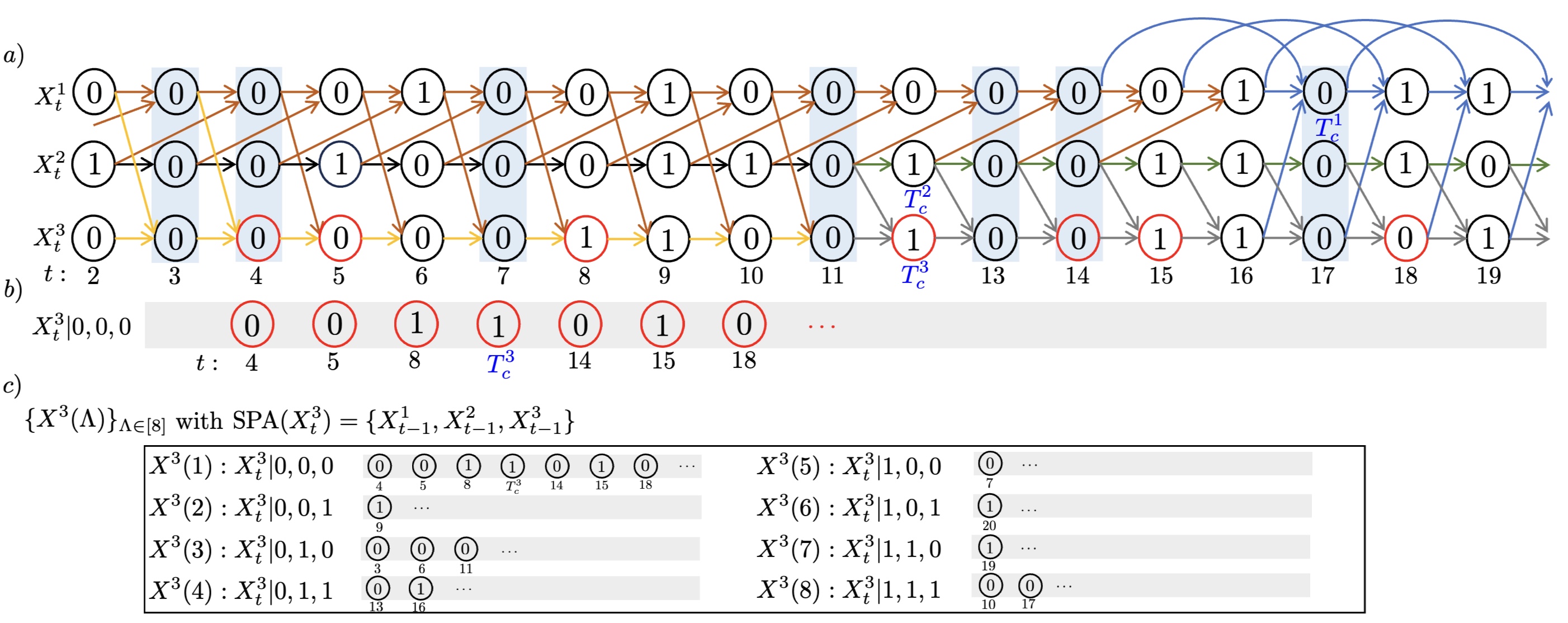}
        \caption{ 
         An illustration of collecting time series segments (Def.\ref{def:TimeSeriesSegments}) in Causal-RuLSIF. a). The Causal Graph of a 3-variate binary time series $V=\{\Xb^{1}, \Xb^{2}, \Xb^{3}\}$ with a \emph{Mechanism-Shift} SCM (Def.\ref{def:Mechanism-Shift}), where $c^1=c^2=c^3=1$, indicating one change point in each univariate time series. Note that the location of change points can differ across univariate time series. E.g., the change point for $X^j$, where $j\in[3]$, is denoted as $T^j_{c}$, and in this case $T^1_{c}\neq T^2_{c}=T^3_{c}$. The edges of different colors represent distinct causal mechanisms. E.g., for $\Xb^1$,  the causal mechanism indicated by the brown edges shifts to a different causal mechanism represented by the blue edges at the time point $T^1_{c}$. There are no restrictions on the causal mechanisms. E.g., for $\Xb^1$, the two causal mechanisms before and after $T^1_{c}$ differ in both the parent sets and the deterministic function $f_{1,t}(\cdot)$; we refer to this scenario as \emph{hard mechanism change}. In contrast, for $\Xb^2$, the two causal mechanisms differ only in $f_{2,t}(\cdot)$ while the parent set remains invariant over time; we refer to this situation as \emph{soft mechanism change}. b). Construction of a time series segment for $\Xb^3$. With $\text{PA}(X^3_{t<T^3_{c}})=\{X^1_{t-1},X^3_{t-1}\}$, $\text{PA}(X^3_{t\geq T^3_{c}})=\{X^2_{t-1},X^3_{t-1}\}$, leading to $\text{SPA}(X^{3}_{t})=\text{PA}(X^3_{t<T^3_{c}})\cup \text{PA}(X^3_{t\geq T^3_{c}})$ for all $t$ (Def.\ref{def: Illusory Parent Sets}). Given that $|\text{SPA}(X^{3}_{t})|=3$ and the domain size of binary variables is 2, one specific configuration of $\text{SPA}(X^{3}_{t})$ is $\{0,0,0\}$. To form the corresponding time series segment, we need to collect all variables $X^3_t\in\Xb^3$  for which $t$ satisfies $\text{spa}(X^{3}_{t})=\{X^1_{t-1}=0,X^2_{t-1}=0,X^3_{t-1}=0\}$. c). Collection of 8 time series segments for $\Xb^3$ as there are total $8$ parents' configurations, starting from $\{0,0,0\}$, $\{0,0,1\}$, $\{0,1,0\}$, and continuing to $\{1,1,1\}$.}
        \label{full causal graph}
    \end{figure*}

\section{RELATED WORK}\label{sec:related-work}

There are various change point detection methods based on the compatibility between the statistical feature shifts they detect and the characteristics of the data. 
Likelihood ratio methods compare the probability density of two consecutive intervals of the time series data and the significant difference in the probability density implies the change point. One representative estimator is the relative unconstrained least-squares importance fitting (RuLSIF) in \citep{yamada2013relative} and \citep{liu2013change}. 
Kernel-based methods in \citep{harchaoui2008kernel} and \citep{harchaoui2009regularized} utilize kernel-based test statistics to test the homogeneity of sliding windows in time series data. Other methods include Probabilistic methods, Graph-based methods, clustering methods and Subspace modeling. See \citep{chib1998estimation}, \citep{friedman1979multivariate}, \citep{keogh2001online} and \citep{itoh2010change}, respectively. See \citep{aminikhanghahi2017survey} for a survey. As discussed in the instruction section, such existing methods concentrate on joint features associated with the whole time series, with many assuming IID samples. 

In addition to these methods, some change point detection techniques specifically target different challenges or leverage additional information embedded in the dataset. \citep{cho2015multiple, barigozzi2018simultaneous, kovacs2023seeded} introduced methods for detecting multiple change points in high-dimensional time series. \citep{killick2012optimal, maidstone2017optimal} focus on increasing the computational efficiency. \citep{safikhani2022joint} proposed a method for estimating structural change points and parameters in high-dimensional piecewise VAR models. \citep{qiu2012granger} applied Granger causality for anomaly detection in time series data. \citep{bardet2010detecting, diop2022epidemic} detect change points in a large class of causal time series models including AR($\infty$), ARCH($\infty$) and TARCH($\infty$) models. More recently, \citep{huang2024causal} introduced a change point detection method within the context of a response variable $Y$ and covariates $X$, assuming a linear SCM that generates $Y$ from $X$.

To the best of our knowledge, no other non-parametric change point detection models are capable of identifying shifts in causal mechanisms without imposing constraints on the form of the mechanism.

Additionally, there is a noteworthy "bonus" contribution to the field of causal discovery. Our change point detection algorithm, when integrated with a postprocessing causal discovery step, is, to our knowledge, the first non-parametric method capable of learning sudden-shift causal structures from non-stationary time series data without assuming strict periodicity as in \citep{gao2023causal}. More related work on this contribution can be found in Appendix~\ref{app:related work}.

 \section{CAUSAL-\textsc{RuLSIF}: DETECTING CHANGE POINT IN A CAUSAL TIME SERIES}
\label{submission}
In this section, we present the framework and problem formulation for change point detection in causal time series, including the key assumptions.

\subsection{Preliminaries}

Let $\mathcal{G}(V, E)$ denote the underlying causal graph. The set of all incoming neighbors for each variable $X \in V$ is defined as the parent set, denoted by $\Pa(X)$. For any $X, Y \in V$ and $S \subset V$, we denote the conditional independence: $X$ is independent of $Y$ conditioned on $S$, by $X \indep Y \mid S$.

For simplicity, let's define sets: $[b]:=\{1,2,...,b\}$ and $[a,b]:=\{a,a+1,...,b\}$, where $a,b\in \mathbb{N}$.
Let $X^j_{t} \in \mathbb{R}$ represent the variable of $j$th \emph{component} univariate time series at time $t$; $\Xb^{j}=\{X^j_{t}\}_{t\in [T]}\in\mathbb{R}^{T}$ denote a \emph{component} univariate time series which is a component in multivariate time series $V$ and $\Xb_{t}=\{X^j_{t}\}_{j\in [n]}\in \mathbb{R}^{n}$ denote a slice of all variables at time point $t$. Note that $V=\{\Xb^{j}\}_{j\in [n]}=\{\Xb_{t}\}_{t\in [T]}\in \mathbb{R}^{n\times T}$ represents a $n$-variate time series, where it can be interpreted either as a collection of $n$ univariate time series (viewed horizontally) or as a sequence of time slices across all variables (viewed vertically). By default, we assume $n>1$ and hence $\Xb^{j}\subsetneq V$, and $p(V)\neq0$, where $p(.)$ denotes the probability. For discrete-valued time series $V$, we assume that the domain of each component $\Xb^j\subsetneq V$, denoted by $D=\{d_1,\cdots,d_s\}$, is the same, that is, $X^j_t \in D $ for all $j \in [n]$ and $t \in [T]$. 

In this paper, the subscript notation over $\{\cdot\}$ indicates that the elements within $\{\cdot\}$ are aggregated across all values of the subscripts.

As $\Pa(X^j_t)$ represents a set containing random variables, $\pa(X^j_t)$ refers to one specific configuration. Consider $\Pa(X^j_{t})=\{X^{i_1}_{t_1},X^{i_2}_{t_2},\cdots\}$ where $1\leq i_1\leq i_2 \leq \cdots \leq n $ and $T \ge t_1\ge t_2 \ge \cdots \ge 1 $. E.g., $\pa(X)=\{1,0,1,2,\cdots\}$ corresponds to a particular configuration (or instantiation). We assume that the configuration set is ordered by the parent set $\{X^{i_1}_{t_1}, X^{i_2}_{t_2},\cdots\}$, prioritizing ascending variable indices over descending time indices.

We start by defining a type of Structural Causal Model (SCM) that captures our setting.
\begin{definition}[\textit{Mechanism-Shift} SCM]\label{def:Mechanism-Shift}
   A Mechanism-Shift SCM is a tuple  $\mathcal{M}=\langle V,\mathcal{F},\mathcal{E},\mathbb{P} \rangle$ where there exists a $\tau_{\max} \in\mathbb{N}^{+}$, defined as: $\tau_{\max}\coloneqq \max\{\tau:X^{i}_{t-\tau}\in \Pa(X^{j}_{t}),i,j \in [n], t\in[T]\}$, such that each variable $X^{j}_{t>\tau_{\max}}\in V$ is a deterministic function of its parent set $\Pa(X^{j}_{t>\tau_{\max}})\in V$ and an unobserved (exogenous) variable $\epsilon^{j}_{t>\tau_{\max}}\in \mathcal{E}$:
\begin{align}
X^{j}_{t}&=f_{j, t}(\Pa(X^{j}_{t}),\epsilon^{j}_{t}), \;j \in [n], t \in [\tau_{\max}+1, T], \label{definition_of_general_SCM}
\end{align}
and for each $j \in [n]$, there exists an ascending sequence of time points $\{T^{j}_{1},T^{j}_{2},\cdots,T^{j}_{c^j} \}$ with $c^j \in \mathbb{N}^{+},\tau_{\max}<T^{j}_{1}<\cdots<T^{j}_{c^j}<T$ such that:
   \begin{align}
   &a) \;f_{j, t_1} = f_{j, t_2}, \text{ if }\forall c \in [c^j]\text{ s.t. }  T^j_c \notin [t_1, t_2];\label{a}\\
     &b) \;f_{j, t_1} \neq f_{j, t_2}, \notag\\&\text{ \;\;if }\exists c \in [c^j]\text{ s.t. }T^j_{c-1}\leq t_1< T^j_c\leq t_2<T^j_{c+1};\label{b}\\
   &c) \;\Pa(X^{j}_{t_1})=\{X^{i}_{t_1-s}:X^{i}_{t_2-s}\in \Pa(X^{j}_{t_2}), i\in[n]\}, \notag\\&\text{\;\;\;if }\forall c \in [c^j]\text{ s.t. }  T^j_c \notin [t_1,t_2];\label{c}\\
   &d)\;\epsilon^j_t \text{ are i.i.d. } \forall t\in[T].\label{d}
   \end{align}
    are satisfied for all $ t_1,t_2 \in [\tau_{\max}+1, T]$, where $f_{j, t}, f_{j, t_{1}}, f_{j, t_{2}}\in \mathcal{F}$ and $\{\epsilon^{j}_{t}\}_{t\in[T]}$ are jointly independent with probability measure $\mathbb{P}$. $\tau_{\max}$ is the finite maximal lag in the causal graph $\mathcal{G}$. Define $T^j_0\coloneqq \tau_{\max}$ and $T^j_{c^j+1}=T$.
    \end{definition}
    
    This indicates that within the univariate time series $\Xb^{j}$ in $V$, there is a finite number of change points, denoted by $c^j$. The variables in $\Xb^{j}$ before and after each change point should exhibit distinct causal mechanisms, as illustrated in b), without overlapping with other change points. Two variables in $\Xb^{j}$  that do not span any change points should share the same function and time-shift invariant parents, as depicted in a) and c). An instance of this model is shown in Fig.~\ref{full causal graph}a).

\begin{definition}[\textit{Illusory Parent Sets}]\label{def: Illusory Parent Sets} 
   For a univariate time series $\Xb^{j}\in V$ with Mechanism-Shift SCM having change points $\{T^{j}_{1},T^{j}_{2},\cdots,T^{j}_{c^j} \}$ with $c^j \in \mathbb{N}^{+},\tau_{\max}<T^{j}_{1}<\cdots<T^{j}_{c^j}<T$, parent set index $\pind^j_{k\in [c^{j}+1]}$ is defined as:
   \begin{align}
   &\pind^{j}_{k}\coloneqq\{(\tau_{i},y_{i})\}_{i\in [m]}, \text{ given }\notag \\&\Pa(X^{j}_{t})=\{X^{y_1}_{t-\tau_1}, X^{y_2}_{t-\tau_2},...,X^{y_{m}}_{t-\tau_{m}}\},
   \end{align}
   for all $t \in [T^j_{k-1},T^j_k]$, where $m=|\Pa(X^{j}_t))|$, $\tau_{i}$ is the time lag and $y_{i}$ is the variable index; set $T^j_{0}=\tau_\text{max}$ and $T^j_{c^j+1}=T$.
   Given $\pind^j_{k}$, \textit{Illusory Parent Sets} are defined as:
   \begin{align}
       &\Pa_{k}(X^{j}_{t})=\bigl\{X^{y_{i}}_{t-\tau_i}:(\tau_i, y_i)\in \pind^{j}_{k}\bigr\},\notag\\&\forall k \in \{k:t \notin [T^j_{k-1},T^j_k]\}\label{fake_parent}
   \end{align}
\end{definition}
Essentially, the illusory parent sets of $X^{j}_{t}$ are time-shifted versions of the parent sets of other variables in $\Xb^{j}$ that exhibit different causal mechanisms than $X^{j}_{t}$ across change points. These sets generalize a concept described in \citep{gao2023causal}. There should be $c^j$ illusory parent sets for each variable $X^j_t$. 

For simplicity, we extend \textit{Illusory Parent Sets} to also include the true parent set of $X^{j}_{t}$, resulting in each variable $X^j_t$ having $c^j+1$ illusory parent sets, one of which is $\Pa(X^j_t)$.
The term \textit{illusory} now indicates that these parent sets \textit{may} not exist for $t$  but must be valid for some $t'\in[T]$.

Further, we define the union parent set as:
\begin{align}
  \SPA(X^j_t)\coloneqq \cup_{k\in[c^j+1]}\Pa_k(X^j_t), t\in[\tau_\text{max}+1,T]  
\end{align}
E.g., in Fig.~\ref{full causal graph}a), $\Pa(X^1_t)=\{X^1_{t-1},X^2_{t-2}\}$ with $t<T^1_{cp}$ and $\Pa(X^1_t)=\{X^1_{t-1},X^1_{t-3}, X^3_{t-1}\}$ with $t\geq T^1_{cp}$. As $c^1=1$, we have two illusory parent sets with index $k\in [c^1+1]$. Specifically, the two illusory parent sets of $\Xb^1$ are $\Pa_1(X^1_t)=\{X^1_{t-1},X^2_{t-2}\}$ and $\Pa_2(X^1_t)=\{X^1_{t-1},X^1_{t-3}, X^3_{t-1}\}$, leading to $\SPA(X^1_t)=\{X^1_{t-1},X^1_{t-3},X^2_{t-2}, X^3_{t-1}\}$.
\begin{definition}[\textit{time series segments}]\label{def:TimeSeriesSegments} 
For a univariate discrete-valued time series $\Xb^{j}\in V$ with Mechanism-Shift SCM having change points set $\{T^{j}_{1},T^{j}_{2},\cdots,T^{j}_{c^j} \}$ with $c^j \in \mathbb{N}^{+},\tau_{\max}<T^{j}_{1}<\cdots<T^{j}_{c^j}<T$, and finite domain set $D=\{d_1,d_2,\cdots,d_s\}$, the time series segments are a collection of non-overlapping non-empty subsets $ \{X^j(\Lambda)\}_{\Lambda\in[s^{|\SPA(X^j_t)|}]}$ such that:
\begin{align}
    &X^j(\Lambda)\coloneqq \{X^j_t: t\in [\tau_\text{max}+1,T],\pa(X^j_t)=\Sigma^j_\Lambda\}, \notag\\&\Lambda\in[s^{|\SPA(X^j_t)|}].
\end{align}
Here, $\Sigma$ represents the configuration matrix of $\SPA(X^j_t)$, with each row representing a unique configuration of $\SPA(X^j_t)$. $\Lambda$ refers to the row index of one specific configuration in $\Sigma$. Since the domain size is $|D|=s$, $\Sigma$ has $s^{|\SPA(X^j_t)|}$ rows, representing all possible configurations of $\SPA(X^j_t)$.
Based on the above definition, we can observe that:
\begin{align}
    &\bigcup\limits_{\Lambda\in [s^{|\SPA(X^j_t)|}]} \{ X^j(\Lambda) \} = \Xb^j,
    \\&\bigcap\limits_{\Lambda_1 \neq \Lambda_2\in[s^{|\SPA(X^j_t)|}]]} \{X^j(\Lambda_1),X^j(\Lambda_2)\} = \emptyset.
\end{align}
\end{definition}

In summary, the time series segments $\{\Xb^j(\Lambda)\}$ partition $\Xb^j$ into multiple non-overlapping, non-empty sub-time series, conditioned on the configurations of the union parent set $\SPA(X^j_t)$. Variables $X^j_t$ within the same $\Xb^j(\Lambda)$ share identical configurations of $\SPA(X^j_t)$. In Fig.~\ref{full causal graph}b) and \ref{full causal graph}c), $\Xb^3$ is divided into 8 time series segments, with each segment having the same configuration $\text{spa}(X^3_t)$. Refer to Appendix~\ref{app:tss} for a detailed example.

\noindent \textbf{RuLSIF: Robust Distributional Distance Estimation}. Given two distributions $p(\cdot)$ and $p'(\cdot)$ defined over the same support, there exist many metrics measuring the distance between them. In~\citep{yamada2013relative}, the authors proposed a method named RuLSIF, with an $\alpha$-relative divergence estimation $r_\alpha(x)$ and a corresponding metric, $\alpha$-relative Pearson Divergence $PE_\alpha$ defined as following:
\begin{align}
    &r_\alpha(x)\coloneqq \frac{p(x)}{(1-\alpha)p(x)+\alpha p'(x)}\coloneqq\frac{p(x)}{q_{\alpha}(x)}\\ &PE_{\alpha}\coloneqq \frac{1}{2}\mathbb{E}_{x \sim q_\alpha}[(r_\alpha(x)-1)^2]
\end{align}
where $\alpha$ is a parameter used to bound $r_\alpha(x)$.

In our work, we innovatively introduce a dynamic parameter, $\beta_i$, within the shifting window framework, rather than relying solely on a fixed hyperparameter $\alpha$ in $PE_\alpha$. The parameter $\beta_i$ represents the \textit{concentration} of the distribution $p'(x)$ within a mixture distribution, where $i$ denotes the window index.

As the shifting window moves from $t=0$ to $t=T$ with a window size of $2\nw$ and a stride size of $\ns$, the window index $i$ increases. Notably, $\beta_i$ transitions from $0$ to $1$ when the shifting window crosses a change point. Specifically, if a change point occurs in the second half of the window, the samples in that half are drawn from a mixture distribution represented by $(1-\beta_i)p(x) + \beta_i p'(x)$. Here, $\beta_i$ indicates the proportion of samples in the second half derived from $p'(x)$, reflecting the change in the underlying distribution of data as the window shifts.

Based on this, the dynamic density ratio and the corresponding Pearson Divergence (PE) score are defined as follows:
\begin{align}
  &r_{\alpha\beta_i}(x)\coloneqq \frac{p(x)}{(1-\alpha\beta_i)p(x)+\alpha\beta_i p'(x)}\coloneqq\frac{p(x)}{q_{\alpha\beta_i}(x)} \\
  &PE_{\alpha\beta_i}\coloneqq \frac{1}{2}\mathbb{E}_{x \sim q_{\alpha\beta_i}}[(r_{\alpha\beta_i}(x)-1)^2]
\end{align}
Further details on dynamic RuLSIF, including the rationale for its selection as the metric and the official definition of $\beta_i$ are provided in Appendix~\ref{app:RuLSIF}.

\noindent \textbf{Assumptions}. We highlight three key assumptions here: (1) For any univariate time series, there is at most one change point, or if multiple change points exist, the temporal distance between any two must be at least $\Delta_c$. (2) There is a lower bound on the dynamic PE divergence. (3) The number of change points is known. The first assumption ensures that multiple change points do not have a cancellation effect making it difficult to identify the change point. The second assumption is commonly applied in distributional distance estimation, particularly in finite sample and discrete settings. The third assumption offers theoretical guarantees but can be relaxed in practice, as demonstrated by the experimental results. The formalization of these three assumptions is provided in Appendix~\ref{app:assumptions}
, which also includes other assumptions established in prior works.


\section{CAUSAL-\textsc{RuLSIF} ALGORITHM}\label{sec:algorithms}
In this section, we first introduce an algorithm named Causal-RuLSIF. We then demonstrate the correctness of Causal-RuLSIF in accurately estimating the change point within a specified confidence interval. Finally, we provide a computational complexity analysis, highlighting both the contribution of the algorithm and its limitations.

\textbf{Overview of Algorithm \ref{alg:Causal-RuLSIF_brief} Causal-RuLSIF}: 
Please note that in this section, we assume a maximum of one change point per univariate time series; however, the framework can be easily extended to accommodate multiple change points. The parameters $\nw$, $\ns$, and $\alpha$ are set at the beginning. The value of $\tau_\text{ub}$ establishes the upper bound for the search space of $\tau_{\max}$ in PCMCI. Each component $\Xb^j$, where \( j \in [n] \), is analyzed sequentially.

Using PCMCI \citep{runge2019detecting}, we obtain $\widehat{\SPA}(X^j_t)$ for each variable \( X^j_t \in \Xb^j \) (line 2). To ensure balanced samples as required in Theorem \ref{pcmci}, PCMCI is applied to non-overlapping consecutive intervals, and the edges obtained are collected. Based on the parent configurations of $\widehat{\SPA}(X^j_t)$, we construct time series segments denoted as $\{X^j(\Lambda)\}_{\Lambda}$.

Next, we apply dynamic RuLSIF on sliding windows over $\{X^j(\Lambda)\}_{\Lambda}$, resulting in the divergence series represented by $\{\widehat{PE}^{j,\Lambda}_{\alpha\beta_i}\}_{i,\Lambda}$ (lines 6-8). Note that the number of divergence series corresponds to the number of time series segments. The change point estimator, denoted as $\widehat{T}_{c}^j$, is the window index \( i \) that maximizes $\{\widehat{PE}^{j,\Lambda}_{\alpha\beta_i}\}_{i,\Lambda}$ (lines 9-10). 

Since $\widehat{T}_{c}^j$ represents the window index within the time series segments rather than the original time index in $\Xb^j$, it must be projected back to $\Xb^j$. The final change point estimator is then defined as $\widetilde{T}_{c}^j$ (line 11). After obtaining an accurate estimator $\widehat{T}_{c}^j$, as established in Theorem~\ref{main: theorem 4.2}, one may optionally run PCMCI on the samples \( {X^j_t} \) before and after the change point \( \widetilde{T}_{c}^j \) to fully learn the underlying causal graph. 


\begin{algorithm}[t!]
\caption{Causal-RuLSIF} 
\label{alg:Causal-RuLSIF_brief}
\begin{algorithmic}[1]
\State \textbf{Input:} A $n$-variate time series $V = (\Xb^{1}, \cdots,\Xb^{n})$ with domain set $D=\{d_1,...d_s\}$. Set appropriate $\tub$, $\nw$, $\ns$ and $\alpha$.
\State A superset of the parent set is obtained using PCMCI with $\tub$ and denote it by $\widehat{\SPA}(X^j_t) \ \forall j, t$.
\For{$\Xb^j$ where $j \in [n]$} 
\State $\widehat{\Pa}(X^j_t)\leftarrow \widehat{\SPA}(X^j_t)$
\State Construct $\{X^j(\Lambda)\}$ based on configurations of $ \widehat{\SPA}(X^j_t)$.
\For{$X^j(\Lambda)$ where $\Lambda\in [s^{|\widehat{\SPA}(X^j_t)|}]$} 
\State Store divergence score series $\{\widehat{PE}^{j,\Lambda}_{\alpha\beta_i}\}$ with RuLSIF on sliding windows shifting over $X^j(\Lambda)$ where $i\in [\lfloor\frac{\tsub-2\nw}{\ns}\rfloor+1]$ and $\tsub\coloneqq |X^j(\Lambda)|$.
\EndFor
\State $\widehat{\Lambda}\leftarrow \arg_\Lambda\max\{\widehat{PE}^{j,\Lambda}_{\alpha\beta_i}\}_{i,\Lambda}\;\;\;\rhd$ Pick the $\Lambda$th time series segments whose maximum value over all segments is maximum.
\State $\widehat{T}_{c}^j\leftarrow \arg_i\max\{\widehat{PE}^{j,\widehat{\Lambda}}_{\alpha\beta_i}\}_i\;\;\;\;\;\rhd$ Pick the $i$th window index with maximum PE score on $X^j(\widehat{\Lambda})$.
\State Project $\widehat{T}_{c}^j$, where $j \in [n]$, back to the original time series $\Xb^j$ with $\widetilde{T}^j_{c} = \frac{1}{2}(t_{\widehat{T}_{c}^j} + t_{\widehat{T}_{c}^j+1})$, where $t_{\widehat{T}_{c}^j}$ is the time index in $\Xb$ corresponding to the midpoint of the window indexed by $\widehat{T}_{c}^j$ in $X^j(\widehat{\Lambda})$.

\State Consider $X^j_{t-\tau} \in \widehat{\Pa}(X^j_t)$. Remove $X^j_{t-\tau}$ from $\widehat{\Pa}(X^j_t)$ if $X^{i}_{t - \tau} \indep X^j_t \mid  \left(\widehat{S\Pa}(X^j_t) \cup  \widehat{S\Pa}(X^i_{t-\tau})\right) \setminus X^i_{t-\tau}$ on samples $\{X^j_t\}_{t<\widetilde{T}^j_{c}}$ and $\{X^j_t\}_{t\geq \widetilde{T}^j_{c}}$ respectively.
\EndFor
\State \Return{$\widetilde{T}^j_{c}$ and $\widehat{\Pa}(X^{j}_t) \ \forall j, t$}.
\end{algorithmic} 
\end{algorithm}

\subsection{Theoretical Guarantees}
Theorem \ref{main: theorem 4.1} establishes that if the window $W_i$ only contains samples from a single causal mechanism, meaning there is no change point included in this window, the estimated relative Pearson Divergence $\widehat{PE}_{\alpha\beta_i}$ is close to zero with high probability. Theorem \ref{main: theorem 4.2} provides a confidence interval for the change point estimator $\widehat{T}^j_c$. Proofs can be found in Appendix~\ref{app:Theoretical_Guarantees}.

\begin{theorem}\label{main: theorem 4.1}
Let $\{\widehat{PE}_{\alpha\beta_i}\}_i$ be the estimated PE series for one time series segment $X^j(\Lambda)\subsetneq \Xb^j \subsetneq V$ and $T^j_c$ denote the true change point in this time series segments. Under certain assumptions, we have that $\forall i \in \{i: i\ns+2\nw-1<T^j_{c}\}$
\begin{align*}
    \Pr \bigl(\max \{\widehat{PE}_{\alpha\beta_i}\}_i &< o(1)\bigr)>1-\frac{\aw-2}{\bs\log \tsub }-\frac{\aw}{\tsub},
\end{align*}
where $\bs=\lfloor \frac{\log \tsub}{\ns}\rfloor$,  $\aw=\lceil\frac{\tsub}{\nw}\rceil$ and $\tsub \coloneqq |X^j(\Lambda)|$.
\end{theorem}
The window index $i$ satisfying $i\ns+2\nw-1<T^j_{c}$ guarantees that all the samples in $W_i$ are collected from the same distribution. Theorem \ref{main: theorem 4.1} states that the maximum estimated PE divergence series obtained from such windows are bounded by any positive constant with probability $1-\frac{\aw-2}{\bs\log \tsub }-\frac{\aw}{\tsub}$ given large enough $\nw$. Note that $a_\text{w}$ and $b_\text{st}$ are constants whose specific values are determined by the chosen window $2\nw$ and stride size $\ns$.
\begin{theorem} \label{main: theorem 4.2}
Let $\{\widehat{PE}_{\alpha\beta_i}\}_i$ be the estimated PE series for one time series segment $X^j(\Lambda)\subsetneq \Xb^j \subsetneq V$ and $T^j_{c}$ denote the true change point in this time series segments. $\widehat{T}_{c}^j$ denotes the estimator of $T^j_{c}$ obtained by:
\begin{align}
    \widehat{T}_{c}^j=\arg_i\max\{\widehat{PE}_{\alpha\beta_i}\}_i
\end{align}
Under certain assumptions, we have that given large enough $\nw$, $\forall i \in [\tau_\text{max}+1,T]$
\begin{align}
     &\Pr\bigl(|\widehat{T}_{c}^j-T^j_c| < 2\nw\bigr)>\notag\\
     &\;\;\;\;\;\;\;\;\;\;\;\;\;\;\;\left(1-\frac{\aw-2}{\bs\log \tsub }-\frac{\aw}{\tsub}\right)\left(1-\frac{1}{\nw}\right).
\end{align}
where $\bs=\lfloor \frac{\log \tsub}{\ns}\rfloor$, $\aw=\lceil\frac{\tsub}{\nw}\rceil$, and $\tsub \coloneqq |X^j(\Lambda)|$.
\end{theorem}
The change point estimator has an estimation error smaller than the total window size with a known probability.

\subsection{Computational Complexity Analysis}
There is a trade-off between computational complexity and result accuracy in the proposed algorithm, with a preference for the latter when significant causal relationships exist. The proposed algorithm functions in two distinct phases as follows:

Phase One (causal discovery) focuses on uncovering the underlying causal structure of the entire non-stationary time series.
Phase Two (change point detection) centers on change point detection within each time series segment, leveraging the causal structures estimated in Phase One.

In Phase One, the worst-case computational
complexity in PCMCI \citep{runge2019detecting} is given by $n^3\tau_\text{max}^2+n^2\tau_\text{max}$ total conditional independence tests. The running time of PCMCI is then scaled by the time series length $T$ and the size of the conditioning set in conditional independence tests.

In Phase Two, the leading term of the asymptotic convergence rates of $\widehat{PE}$ is $n_\text{w}^{-1/2}$, as discussed in \citep{yamada2013relative}. In our algorithm, each time series segment contains a total of $\left\lfloor \frac{\tsub - 2\nw}{\ns} \right\rfloor + 1$ sliding windows, which implies that the number of $\widehat{PE}$ estimates is also $\left\lfloor \frac{\tsub - 2\nw}{\ns} \right\rfloor + 1$. Consequently, the total number of estimators $\{\widehat{PE}^{j,\Lambda}_{\alpha\beta_i}\}_{i,\Lambda}$ for $\Xb^j$ with $j\in[n]$ can be expressed as:

\begin{align}
    |\{\widehat{PE}^{j,\Lambda}_{\alpha\beta_i}\}_{i,\Lambda}| \approx (\left\lfloor \frac{\frac{T}{|s^{|\widehat{S\Pa}(X^j_t)|}|} - 2\nw}{\ns} \right\rfloor + 1)|s^{|\widehat{S\Pa}(X^j_t)|}|
\end{align}
The RuLSIF method for estimating relative PE divergence is computationally efficient as the optimization process utilizes a kernel-based approach with a square loss.

The complexity of Phase One increases cubically with increasing dimensions of $n$. Compared to existing high-dimensional methods in \citep{killick2012optimal} and \citep{kovacs2023seeded}, the computational complexity of Phase Two depends on the sparsity of the underlying causal structure among the $n$-variate time series, rather than directly on $n$. If the causal structure is sparse, the complexity of Phase Two remains unaffected. Specifically, for a target time series $\Xb^j$, adding one more time series $\Xb^i$ does not increase the complexity of detecting change points in $\Xb^j$, assuming there are no time-lagged causal effects from $\Xb^i$ to $\Xb^j$. This advantage distinguishes our approach from existing methods.

However, a limitation arises when the parent size of each variable increases linearly with $n$, as the number of time series segments for $\Xb^j$ in Phase Two will increase exponentially with the size of the superset parent set $\widehat{\SPA}(X_t^j)$.

As shown in Fig. \ref{fig 6} (Runtime for algorithms), the additional computational time required by Causal-RuLSIF, compared to RuLSIF, arises from Phase One and the need to estimate the PE series across multiple univariate time series segments (Phase Two), rather than processing the entire time series like other baselines. This introduces a trade-off between analyzing multiple univariate time series segments and handling one high-dimensional data. The computational burden is balanced with the advantages of segment-wise analysis.

The complexity of Phase One scales with $T$, depending on the specific choice of the conditional independence (CI) test. As the proposed algorithm applied sliding window techniques, the complexity of Phase Two increases linearly with $T$, consistent with other methods in \citep{killick2012optimal} and \citep{kovacs2023seeded}.

\section{EXPERIMENTS}\label{sec:experiments}
In this section, we present an empirical evaluation of our approach compared to existing methods, using both synthetic and real-world datasets. Section~\ref{simulations binary} analyzes simulation results on binary multivariate time series, while Section~\ref{case study} offers a case study. The Python code is provided at \href{https://github.com/CausalML-Lab/CausalCPD}{https://github.com/CausalML-Lab/CausalCPD}.

Sample efficiency presents a significant challenge for our algorithm, given that \( \tsub \approx T/|s^{|\widehat{S\Pa}(X^j_t)|}|\). To address this issue, we either implement the k-PC algorithm from \citep{kocaoglu2024characterization} or directly constrain the candidate parent set using the top-K causal strengths derived from the PCMCI algorithm. These enhancements enable our algorithm to effectively handle non-binary discrete-valued multivariate time series with larger domain sizes (\( D > 2 \)) and a greater number of component time series (\( n > 3 \)). Comprehensive experimental results are provided in Appendix~\ref{app:k-PC and Top-K}.

Additionally, we extend our approach to detect multiple change points with prior knowledge of their quantity (Assumption A8). The experimental results and corresponding theoretical guarantees are also available in Appendix~\ref{app:multiple change points}.

\subsection{Simulations on binary multivariate time series} \label{simulations binary}

 In this section, we have five baseline algorithms, including RuLSIF algorithm in \citep{liu2013change}, traditional method change in mean (CIM) in \citep{vostrikova1981detecting}, changeforest algorithm (RF) in \citep{londschien2023random}, ecp algorithm in \citep{james2013ecp} and kscp3o algorithm in \citep{zhang2017pruning}. 
 All experimental code will be available online.

The details of synthetic binary time series generation can be found in Appendix \ref{app:data generation}.

We have two methods to evaluate the performance of the algorithms. The first method is to calculate the estimation error using $\frac{|\widetilde{T}^j_c - T^j_c|}{T}$. The second method is to construct an ROC curve by setting an interval length, denoted as $2Q$. With the change point estimator $\widetilde{T_c}$ and interval length $Q$, we increment a counter by 1 if the true change point $T^j_c$ falls within the interval $[\widetilde{T}^j_c - Q, \widetilde{T}^j_c + Q]$. This count is then averaged over the total number of univariate time series in the 100 random trials. This is a common metric for measuring the performance of the change point detection algorithm, as described in \citep{liu2013change} and \citep{harchaoui2008kernel}.

\begin{figure*}[t!]
\centering     
\subfigure[Influence of $|\SPA(X^j_t)|$ on estimation error]{\label{fig2:a}\includegraphics[height=52mm,width=65mm]{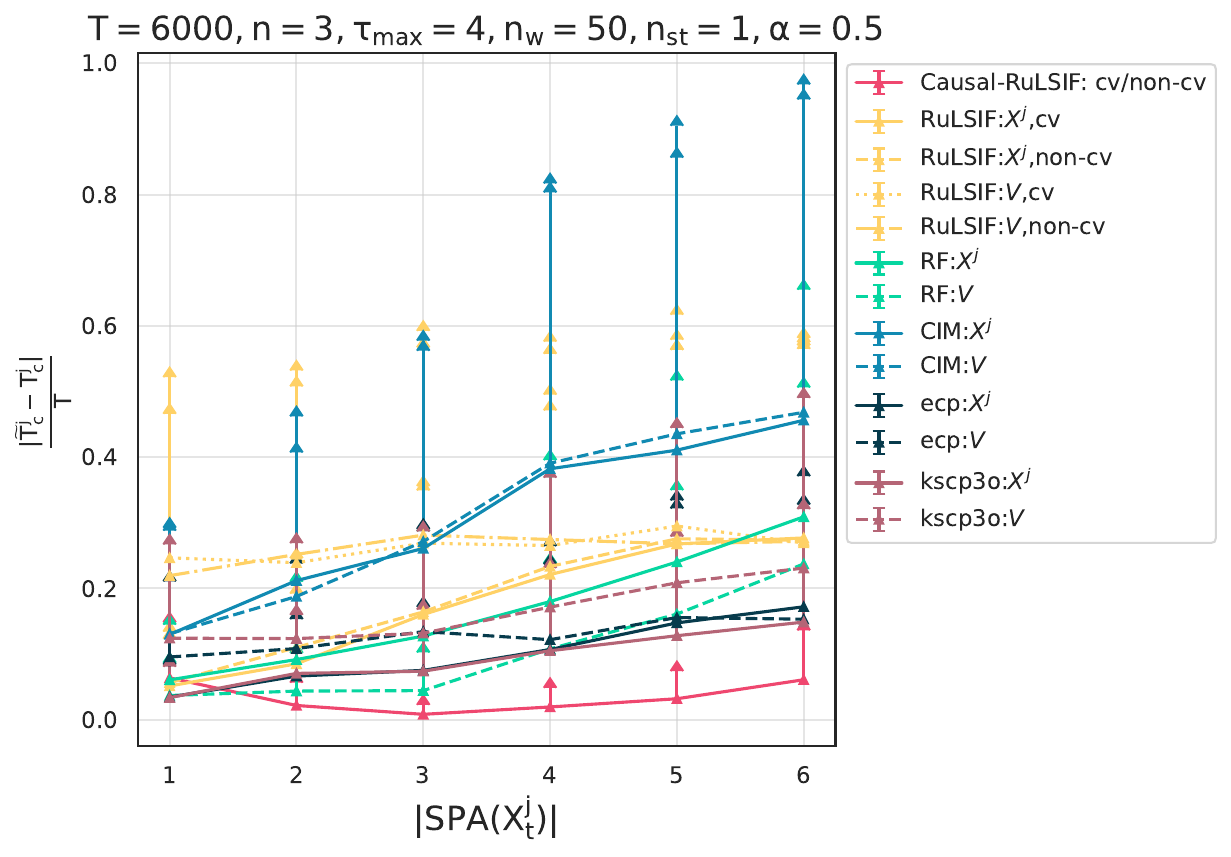}}
\subfigure[ROC curves for different $|\SPA(X^j_t)|$]{\label{fig2:b}\includegraphics[height=52mm,width=73mm]{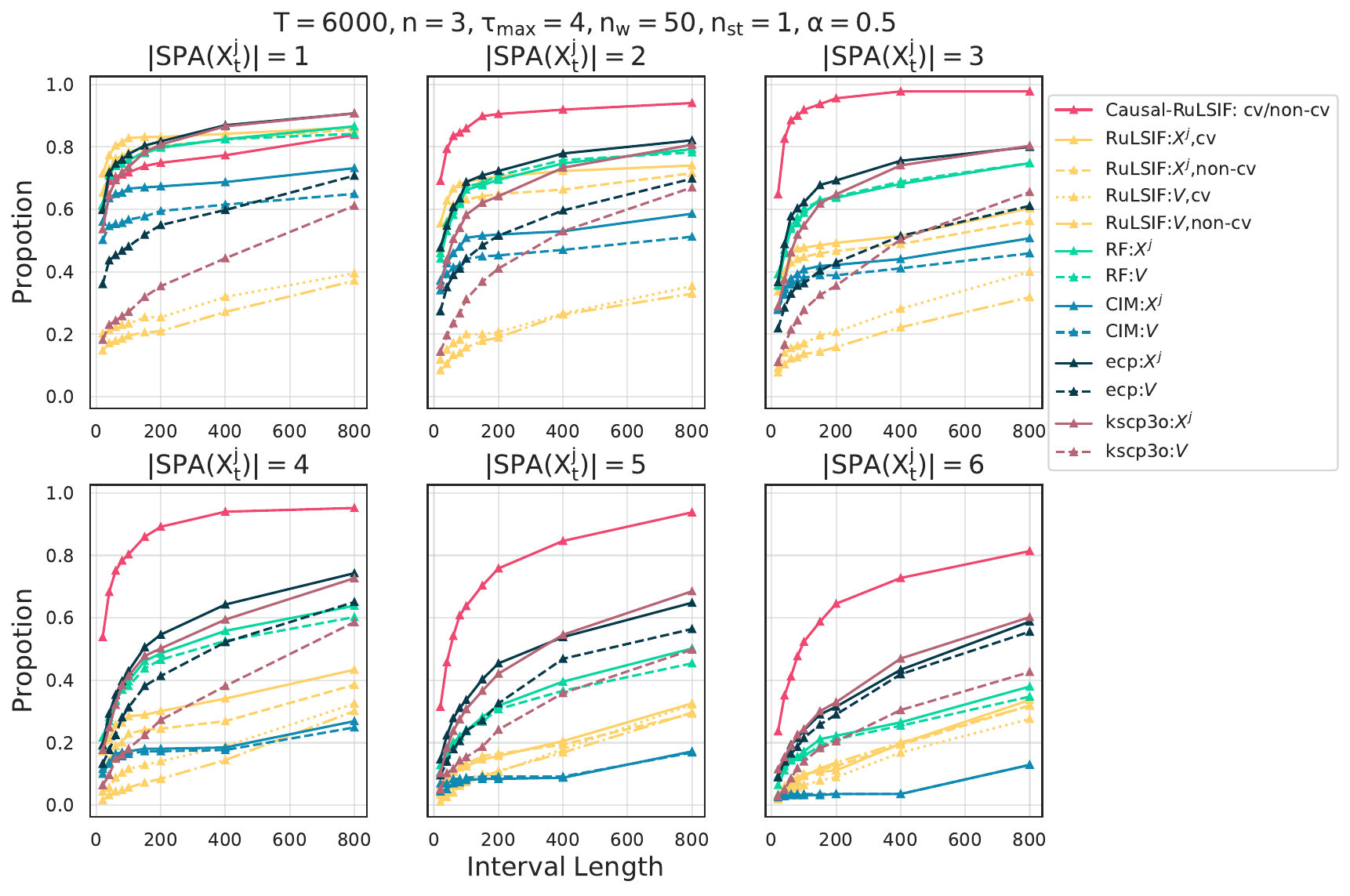}}
\caption{ Causal-RuLSIF is tested on 3-multivariate time series with $T=6000,\tau_{\max}=4,\nw=50,\ns=1$ with \emph{soft mechanism change}. Every line with a different color corresponds to a different algorithm and different linestyle corresponds to a different setting. $X^j$ in the legend means the algorithm is applied to each univariate time series while $V$ means the algorithm is used for the whole $n$-variate time series $V$. Every marker corresponds to the average error or average accuracy rate over 100 random trials. The error bar represents the standard error for the averaged statistics. a) Influence of $|\SPA(X^j_t)|$ on estimation error $\frac{|\widetilde{T}^j_c-T^j_c|}{T}$. b) ROC curves for different $|\SPA(X^j_t)|$.}
\end{figure*}

Please note that in the RuLSIF method, the kernel width $\sigma$ and the regularization parameter in the kernel function are typically chosen using cross-validation, as outlined in \citep{liu2013change} and \citep{harchaoui2008kernel}. This approach is justified, as high-dimensional data can render these parameters more sensitive. However, for binary time series, cross-validation may not be necessary when applying our method, Causal-RuLSIF. One possible reason is that our algorithm avoids the complexities associated with high-dimensional data $V$ by focusing on the analysis of one-dimensional time series segments $X^j(\Lambda)\subsetneq V$. As shown in Fig.\ref{fig2:a} and Fig.\ref{fig2:b}, the red line represents our algorithm. With or without the cross-validation technique does not influence its performance. For RuLSIF, the cross-validation (cv) and no cross-validation (non-cv) do not overlap.

For all the baselines, we either apply them to the entire $n$-variate time series or to each component $\Xb^j$. In the former scenario, multiple change point estimations are expected, making it difficult to determine which corresponds to which univariate time series. For the RuLSIF method, we select the estimations with the top $n$ change scores and randomly assign them to each univariate time series. Regarding RF, CIM, ecp, and kscp3o, even if the optimal estimations are selected when $T^j_c$ is known, their performance does not surpass ours. In the case of RF and CIM, the change point detection relies on the significance of certain statistics. If these methods fail to detect any change point, we set the estimated change point to $T$.

From Fig.\ref{fig2:a} and Fig.\ref{fig2:b}, Causal-RuLSIF is not optimal when $|\SPA(X^j_t)|=1$. In this scenario, each variable $X^j_t$ only receives an incoming edge from its only parent $X^j_{t-1}$, indicating no correlation among different time series. This special structure makes certain baselines more advantageous, particularly considering the potential false positive edges identified by the proposed algorithm, which lacks prior knowledge of the absence of correlations and limited sample size. Therefore, it is reasonable for some baselines to exhibit better performance when applied individually to each time series. 

However, when $|\SPA(X^j_t)|>1$, our algorithm outperforms others, as it focuses on shifts in causal mechanisms given other time series. The performance of Causal-RuLSIF decreases as $|\SPA(X^j_t)|$ increases because the effective sample size decreases with the number of parent configurations, which is $2^{|\SPA(X^j_t)|}$ for binary time series.

Additional experiments on \emph{hard mechanism change}, the impact of $\nw$, the relative location of $T^1_c$ and $T^2_c$, and runtime analysis are provided in Appendix \ref{app:simulations}.

From Fig.\ref{fig4:a} and Fig.\ref{fig4:b}, it is evident that increasing the length $T$ of the time series $V$ enhances the performance of Causal-RuLSIF, thus practically validating the consistency of the algorithm.
\begin{figure*}[t!]
\centering     
\subfigure[Influence of $T$ on estimation error]{\label{fig4:a}\includegraphics[height=52mm,width=65mm]{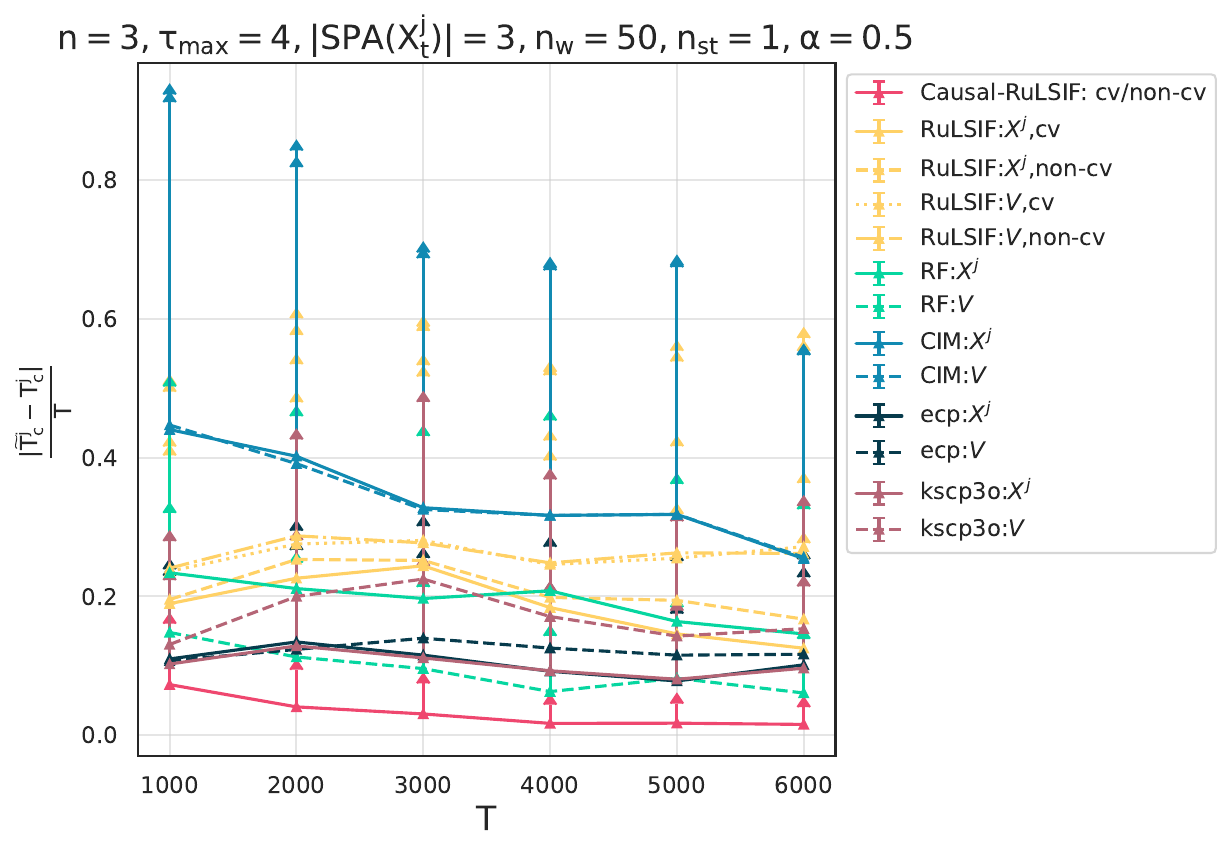}}
\subfigure[ROC curves for different $T$]{\label{fig4:b}\includegraphics[height=52mm,width=73mm]{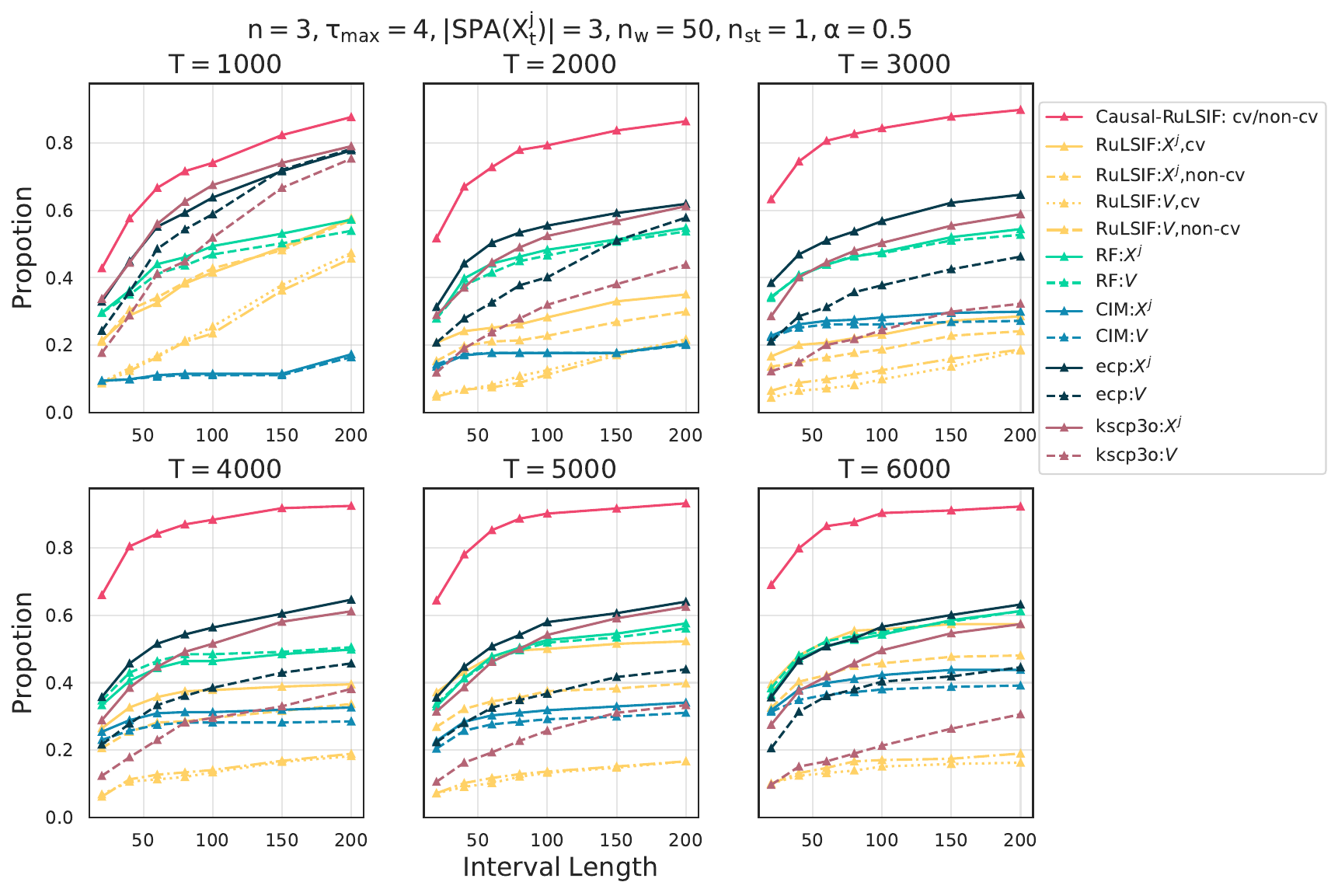}}
\caption{Causal-RuLSIF is tested on 3-multivariate time series with $|\SPA(X^j_t)|=3,\tau_{max}=4,\nw=50,\ns=1$ with \emph{soft mechanism change}. Every line with a different color corresponds to a different algorithm and different linestyle corresponds to a different setting. $X^j$ in the legend means the algorithm is applied to each univariate time series while $V$ means the algorithm is used for the whole $n$-variate time series $V$. Every marker corresponds to the average error or average accuracy rate over 100 random trials. The error bar represents the standard error for the averaged statistics. a) Influence of $T$ on estimation error $\frac{|\widetilde{T}^j_c-T^j_c|}{T}$. b) ROC curves for different $T$.}
\end{figure*}

\subsection{Case Study}\label{case study}
Here, we construct an experiment with a real-world air pollution dataset. This dataset monitors the amount of PM$_\text{10}$ (coarse particles with a diameter between 2.5 and 10 micrometers) in the air. The 3-variate time series data records the hourly concentration of PM$_\text{10}$ across three counties in California—Fresno, Mono and Monterey—from Jan 
 to June 2023. There are a total of 4305 samples. Let $\Xb^\text{Fr}$, $\Xb^\text{Mono}$ and $\Xb^\text{Mont}$ denote the indicators of PM${10}$ exceeding 10 across the three counties.

Using Causal-RuLSIF, the change point estimators $\widetilde{T}^j_c$ for each $\Xb^j$ where $j \in [3]$ are shown in the Table~\ref{table:1}, along with the parent sets before and after the estimated change point. 

Based on the results, assuming there is one change point in PM${10}$ concentration in Fresno, Mono, and Monterey during the first half of 2023, the causal mechanism of PM$_\text{10}$ in Fresno is likely to shift on May 8, 2023. Additionally, while the PM$_{10}$ levels in Fresno and Monterey are not influenced by other counties, a causal link from Monterey to Mono has emerged after February 2, 2023.

\begin{table}[t]
\vspace{2mm}
\centering
\caption{Causal-RuLSIF in PM$_\text{10}$ dataset}
\label {table:1}  
\begin{tabular}{ c  c  c }
\toprule
$X$ & $\widetilde{T}^j_c$ & {$\widehat{\Pa}(X^j_{t<\widetilde{T}^j_c})$; $\widehat{\Pa}(X^j_{t\geq \widetilde{T}^j_c})$}\\
\midrule
 $\Xb^\text{Fr}$ & 05/08/23 &  $\{X^\text{Fr}_{t-1}\}$; $\{X^\text{Fr}_{t-1,t-2,t-3}\}$ \\


 $\Xb^\text{Mono}$ & 02/01/23 & $\{X^\text{Mono}_{t-1}\}$; $\{X^\text{Mono}_{t-1,t-2},X^\text{Mont}_{t-3}\}$ \\
 
 $\Xb^\text{Mont}$ & 04/04/23 &$\{X^\text{Mont}_{t-1}\}$;$\{X^\text{Mont}_{t-1}\}$\\
 \bottomrule
\end{tabular}
\end{table}
Without the ground truth and relevant knowledge about air pollution and other climate-related information, it is difficult to determine the significance of the case study results. We hope this real data application can offer insights for experts in other fields on detecting change points in causal mechanisms in practice. Additional case studies can be found in Appendix~\ref{app:case study}.
 


\section{CONCLUSION}\label{sec:conclusion}
In this paper, we introduced a novel change point detection algorithm, Causal-RuLSIF, to identify significant changes in causal mechanisms for discrete-valued time series data. By integrating a post-processing causal discovery stage with a novel dynamic divergence estimation, our algorithm accurately detects when causal mechanism shifts occur without imposing constraints on the form of the shift. We provide a theoretical uncertainty analysis of the change point estimator. Our empirical evaluation demonstrates the consistency and robustness of the proposed algorithm. The limitations of the algorithm are discussed in Appendix~\ref{app:limitations}.
\section{ACKNOWLEDGEMENTS}
This research has been supported in part by NSF CAREER 2239375, IIS 2348717, Amazon Research Award and Adobe Research. We sincerely thank the anonymous reviewers for their insightful and constructive feedback, which greatly improved the quality of this manuscript.
\clearpage

\clearpage
\section*{CHECKLIST}

 \begin{enumerate}

 \item For all models and algorithms presented, check if you include:
 \begin{enumerate}
   \item A clear description of the mathematical setting, assumptions, algorithm, and/or model. [\textcolor{red}{Yes}/No/Not Applicable]
   \item An analysis of the properties and complexity (time, space, sample size) of any algorithm. [\textcolor{red}{Yes}/No/Not Applicable]
   \item (Optional) Anonymized source code, with specification of all dependencies, including external libraries. [\textcolor{red}{Yes}/No/Not Applicable]
 \end{enumerate}

 \item For any theoretical claim, check if you include:
 \begin{enumerate}
   \item Statements of the full set of assumptions of all theoretical results. [\textcolor{red}{Yes}/No/Not Applicable]
   \item Complete proofs of all theoretical results. [\textcolor{red}{Yes}/No/Not Applicable]
   \item Clear explanations of any assumptions. [\textcolor{red}{Yes}/No/Not Applicable]     
 \end{enumerate}

 \item For all figures and tables that present empirical results, check if you include:
 \begin{enumerate}
   \item The code, data, and instructions needed to reproduce the main experimental results (either in the supplemental material or as a URL). [\textcolor{red}{Yes}/No/Not Applicable]
   \item All the training details (e.g., data splits, hyperparameters, how they were chosen). [\textcolor{red}{Yes}/No/Not Applicable]
         \item A clear definition of the specific measure or statistics and error bars (e.g., with respect to the random seed after running experiments multiple times). [\textcolor{red}{Yes}/No/Not Applicable]
         \item A description of the computing infrastructure used. (e.g., type of GPUs, internal cluster, or cloud provider). [\textcolor{red}{Yes}/No/Not Applicable]
 \end{enumerate}

 \item If you are using existing assets (e.g., code, data, models) or curating/releasing new assets, check if you include:
 \begin{enumerate}
   \item Citations of the creator If your work uses existing assets. [\textcolor{red}{Yes}/No/Not Applicable]
   \item The license information of the assets, if applicable. [\textcolor{red}{Yes}/No/Not Applicable]
   \item New assets either in the supplemental material or as a URL, if applicable. [Yes/No/\textcolor{red}{Not Applicable}]
   \item Information about consent from data providers/curators. [\textcolor{red}{Yes}/No/Not Applicable]
   \item Discussion of sensible content if applicable, e.g., personally identifiable information or offensive content. [Yes/No/\textcolor{red}{Not Applicable}]
 \end{enumerate}

 \item If you used crowdsourcing or conducted research with human subjects, check if you include:
 \begin{enumerate}
   \item The full text of instructions given to participants and screenshots. [Yes/No/\textcolor{red}{Not Applicable}]
   \item Descriptions of potential participant risks, with links to Institutional Review Board (IRB) approvals if applicable. [Yes/No/\textcolor{red}{Not Applicable}]
   \item The estimated hourly wage paid to participants and the total amount spent on participant compensation. [Yes/No/\textcolor{red}{Not Applicable}]
 \end{enumerate}

 \end{enumerate}

\newpage
\appendix




\onecolumn
\aistatsappendixtitle{Causal Discovery-Driven Change Point Detection in Time Series: \\
Supplementary Materials}
\section*{Appendix Outline}\label{app:outline}
In Section~\ref{app:prelims}, Section~\ref{app:tss} provides a toy example of time series segments. Section~\ref{app:RuLSIF} introduces the proposed dynamic relative PE divergence and explains our rationale for modifying relative PE divergence in our algorithm. In Section~\ref{app:assumptions}, assumptions are stated. Section~\ref{app:Theoretical_Guarantees} contains detailed proofs of theorems. Section~\ref{app:related work} discusses related work on causal discovery methods.

Details of the simulated time series generation process can be found in Section~\ref{app:data generation}. Additional experimental results, continuing from the experiment section in the main paper, are presented in Section~\ref{app:simulations}. Section~\ref{app:k-PC and Top-K} addresses sample efficiency issues in practice, and Section~\ref{app:multiple change points} extends our algorithm to handle multiple change point cases. A cautionary case study demonstrating the necessary requirements of our algorithm is provided in Section~\ref{app:case study}. Finally, the limitations are discussed in Section~\ref{app:limitations}.

\section{Preliminaries}\label{app:prelims}
\subsection{Time Series Segments}\label{app:tss}
In Fig.~\ref{full causal graph}a), for $\Xb^1$, $\SPA(X^1_t)=\{X^1_{t-1},X^1_{t-3},X^2_{t-2},X^3_{t-1}\}$. Assuming $V$ is binary time series with $D=[0,1]$, we have:
\begin{align}
   \Sigma^1 \coloneqq  \begin{blockarray}{cccccc}
  & \matindex{$X^1_{t-1}$}&\matindex{$X^1_{t-3}$} & \matindex{$X^2_{t-2}$} & \matindex{$X^3_{t-1}$}& \\
    \begin{block}{c[cccc]c}
      \matindex{1} & 0 & 0 & 0 & 0 \\
      \matindex{2} & 1 & 0 & 0 & 0   \\
       \matindex{3} & 0 & 1 & 0 & 0   \\
      \matindex{$\cdots$} &\text{$\cdots$} & \text{$\cdots$} &\text{$\cdots$}& \text{$\cdots$}&  \\
      \matindex{$s^{|\SPA(X^j_t)|}$} & 1 & 1 & 1 & 1 & {\text{$s^{|\SPA(X^j_t)|} \times |\SPA(X^j_t)|$}}\\
    \end{block}
  \end{blockarray}
\end{align}
With $s=|D|=2$ and $|\SPA(X^j_t)|=4$, there are $16=2^4$ configurations of $\SPA(X^j_t)$. Hence there are $16$ time series segments of $\Xb^1$. More specifically, $X^1(1)\coloneqq \{X^j_t: t\in [\tau_\text{max}+1,T],\pa(X^j_t)=[0,0,0,0]\}$, $X^1(2)\coloneqq \{X^j_t: t\in [\tau_\text{max}+1,T],\pa(X^j_t)=[0,1,0,0]\}$ and so on. For $\Xb^3$, $\SPA(X^3_t)=\{X^1_{t-1},X^2_{t-1},X^3_{t-1}\}$ and hence there are total $2^3=8$ configurations. Fig.\ref{full causal graph}b) shows the construction of $X^3(1)$ and Fig.\ref{full causal graph}c) illustrates $8$ total time series segments of $\Xb^3$.

\subsection{RuLSIF: Robust Distribution Comparison}\label{app:RuLSIF}

Given two distributions $p(\cdot)$ and $p'(\cdot)$ defined over the same support, there exist many metrics measuring the distance between them. In~\citep{yamada2013relative}, proposed a method named RuLSIF, with an $\alpha$-relative divergence estimation $r_\alpha(x)$ and a corresponding metric, $\alpha$-relative Pearson Divergence $PE_\alpha$ defined as following:
\begin{align}
    &r_\alpha(x)\coloneqq \frac{p(x)}{(1-\alpha)p(x)+\alpha p'(x)}\coloneqq\frac{p(x)}{q_{\alpha}(x)}\label{r_alpha}\\
    &PE_{\alpha}\coloneqq \frac{1}{2}\mathbb{E}_{x \sim q_\alpha}[(r_\alpha(x)-1)^2]
\end{align}
where $\alpha$ is a parameter used to bound the value of $r_\alpha(x)$.
With IID samples, we can obtain a direct density-ratio estimator $\hat{r}_\alpha(x)$ using a kernel function, by minimizing a squared loss function. The estimator has been proven to have a non-parametric convergence speed.
In \citep{liu2013change}, the method was applied to tackle the change point detection problem in time series using sliding windows. By dividing the time series into retrospective segments, a sequence of PE divergence estimated by the RuLSIF method is obtained by assessing the distribution divergence between samples from two consecutive segments. Peaks in the divergence score $PE_{\alpha}$ can indicate change points within the joint distribution. The retrospective segments in \citep{liu2013change} are high-dimensional, even for univariate time series $V$.


In our proposed method, we also utilize sliding windows. However, since the focus shifts from the joint distribution to the causal mechanism, we do not construct high-dimensional retrospective segments from $V$. Instead, we create one-dimensional consecutive segments on the time series segments for each $\Xb^j\subsetneq V$. This approach avoids the Curse of dimensionality and enhances the method's robustness to the hyperparameters in the kernel functions, as will be verified in the experiment section. 

Since we construct time series segments in our framework, the samples are IID. The distributions that need to be compared are formalized as follows:
\begin{align}
 p(X^j_{t_1}|\spa(X^j_{t_1})=\Sigma_{\Lambda}) \text{ vs } p(X^j_{t_2}|\spa(X^j_{t_2})=\Sigma_{\Lambda})
\end{align}
where $\Lambda \in [s^{|\SPA(X^j_t)|}]$, $t_1<T_{c}$ and $t_2\geq T_{c}$.

To simplify matters, we use $p(x)$ to denote $p(X^j_{t_1}|\spa(X^j_{t_1})=\Sigma_{\Lambda})$ and $p'(x)$ represent $p(X^j_{t_2}|\spa(X^j_{t_2})=\Sigma_{\Lambda})$ in the rest of the paper.


  \begin{figure}[t!]
    \centering     
     \subfigure[The first time series segment $X^3(1)$ from Fig.\ref{full causal graph}b) for $\Xb^3$]{\label{fig:sliding window}\includegraphics[width=149mm]{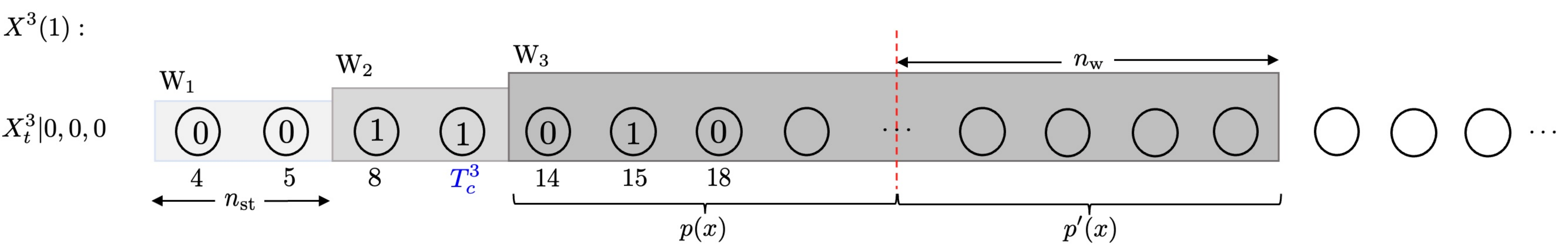}}
     
    \subfigure[$\alpha$-relative Pearson (PE) divergence series for an one arbitrary $3$-variate time series with domain size 2]{\label{fig:example 1 pe divergence}\includegraphics[width=149mm]{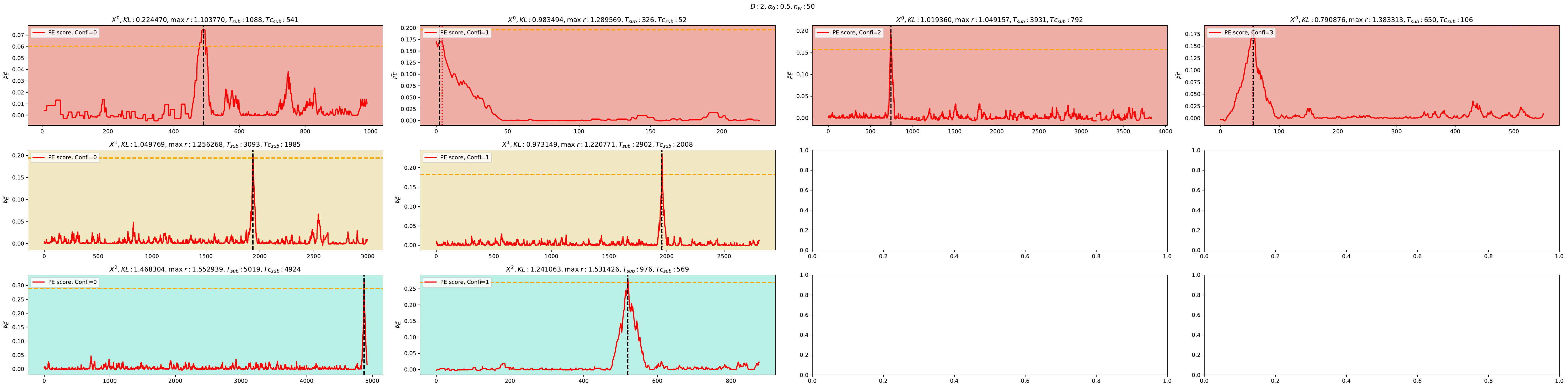}}
    \subfigure[$\alpha$-relative Pearson (PE) divergence series for an one arbitrary $3$-variate time series with domain size 3]{\label{fig:example 2 pe divergence}\includegraphics[width=149mm]{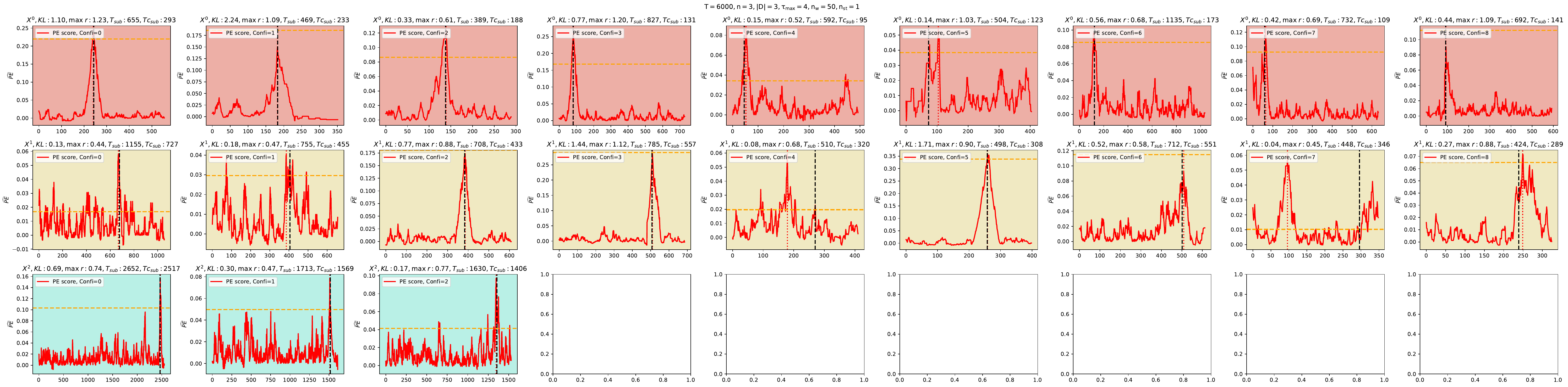}}
            \caption{a) The first time series segment $X^3(1)$ from Fig.\ref{full causal graph}b) for $\Xb^3$. This toy example illustrates the sliding windows with $\nw$, $\ns$ for $\{X^3(\Lambda)\}_{\Lambda\in[8]}$. b) and c) illustrate the PE divergence series under different parent configurations for an arbitrary 3-variate time series. The y-axis represents the PE score, and the x-axis represents the index $i$ of the shifting window. The plots are organized into three rows, each corresponding to a univariate time series $X^j$, where $j \in [3]$. Each column represents a particular parent configuration. For instance, there are four configurations for $\textbf{X}^1$, while $\textbf{X}^2$ and $\textbf{X}^3$ each have only two configurations in (b). The black vertical line indicates the true $\widehat{T}^j_c$ within that time series segment, and the red vertical line indicates the location of the maximum estimated PE score through Causal RuLSIF. The yellow horizontal line represents the true PE score.
            }
    \end{figure}

Fig.\ref{fig:sliding window} provides a toy example illustrating how the sliding window operates on time series segments. A single PE divergence score is generated using two sets of samples, one from each half of the window. As the sliding window shifts from the start to the end of a time series segment, a PE divergence series is obtained. This series is represented in each subplot in Fig.~\ref{fig:example 1 pe divergence} and Fig.~\ref{fig:example 2 pe divergence}. Let $W_i$ denote the $i$th window, where $W^1_i$ represents the first half and $W^2_i$ represents the second half.

In the $i$th sliding window containing one change point, without loss of generality, assume that the change point is within the second half, $W^2_i$, of the window. Thus, while the samples in $W^1_i$ come from $p(x)$, the samples in $W^2_i$ come from a mixture distribution $(1-\beta_i)p(x) + \beta_i p'(x)$, where $\beta_i$ is the proportion of samples from $p'(x)$ within $W^2_i$. Note that $\beta_i$ is an unknown parameter, while $i$ is the known window index. Therefore, it is more accurate to denote the divergence score for the $i$th window as $PE_{\alpha\beta_i}$. To clarify, we will use $PE_{\alpha\beta_i}$ instead. By replacing $p'(x)$ with $(1-\beta_i)p(x) + \beta_i p'(x)$ in Eq.\ref{r_alpha}, we obtain the $\alpha\beta_i$-relative divergence estimation $r_{\alpha\beta_i}(x)$ and the $\alpha\beta_i$-relative Pearson Divergence for the $i$th window:
\begin{align}
  &r_{\alpha\beta_i}(x)\coloneqq \frac{p(x)}{(1-\alpha\beta_i)p(x)+\alpha\beta_i p'(x)}\coloneqq\frac{p(x)}{q_{\alpha\beta_i}(x)}\label{r_alphai}\\ 
  &PE_{\alpha\beta_i}\coloneqq \frac{1}{2}\mathbb{E}_{x \sim q_{\alpha\beta_i}}[(r_{\alpha\beta_i}(x)-1)^2]\label{eq.24}
\end{align}
and
\begin{align}
    \beta_i := (1-\frac{T_c-(in_\text{st}+n_\text{w})}{n_\text{w}})\mathbf{1}({T_c-in_\text{st}\geq n_\text{w} \text{ and } T_c-in_\text{st}<2n_\text{w} })\label{beta}
\end{align}
where $T_\text{sub}$ is the target time series segment, $n_\text{w}$ is the half window size and $n_\text{st}$ is the stride size.

Fig.~\ref{fig:example 1 pe divergence} and Fig.~\ref{fig:example 2 pe divergence} display the PE divergence series for a 3-variate time series with domain sizes 2 and 3, respectively. These series were generated by applying a shifting window to each segment of the univariate time series $\Xb^j$ for $j\in[3]$. In each figure, each subplot represents the PE divergence series for a single time series segment, while the rows of subplots collectively show multiple PE divergence series for the corresponding univariate time series $\Xb^j$ for $j\in[3]$. Each PE divergence value is calculated from a window, forming the PE divergence series by sliding the window from $t=0$ to $t=T_{\text{sub}}-2n_{\text{w}}$.

The reasons we chose the $\alpha$-relative density ratio PE as the fundamental divergence measure in our framework are:
\begin{itemize}

    \item One motivation for choosing relative PE divergence is that its asymptotic properties have been explored in \citep{yamada2013relative}, allowing us to directly use these properties in our theorem. In \citep{yamada2013relative}, the authors theoretically demonstrate that the $\alpha$-relative PE divergence estimator, based on $\alpha$-relative density-ratio approximation, offers a more favorable non-parametric convergence speed than the standard density-ratio approach. Additionally, with a correctly specified parametric setup, the asymptotic variance of the proposed $\alpha$-relative PE divergence estimator remains independent of model complexity. This implies that the proposed estimator resists overfitting, even when applied to complex models.

    \item As a squared-loss variant of KL divergence, PE divergence estimator is computationally cheaper than KL divergence because it does not involve the log term. Compared to the unbounded density ratio $r(x) = \frac{p(x)}{p'(x)}$ in PE divergence, the $\alpha$-relative density ratio $r_{\alpha}(x) = \frac{p(x)}{\alpha p'(x) + (1 - \alpha) p(x)}$ in $\alpha$-relative PE divergence is always bounded above by $\frac{1}{\alpha}$. Relative PE divergence converges faster than PE divergence.

    \item The hyperparameter $\alpha$ in the $\alpha$-relative PE divergence can be modified as a dynamic index in the shifting window framework, representing the 'mixture' of two distributions. The denominator in $ r_{\alpha}(x) = \frac{p(x)}{\alpha p'(x) + (1-\alpha)p(x)}$ can be interpreted as a mixture distribution. A varying $\alpha_t$ can then act as a dynamic index for the "concentration" of $p^{\prime}(x)$ in this mixture as the shifting window moves from $t=0$ to $t=T$. This change allows for a transition from a static definition of $\alpha$-relative PE divergence to a dynamic conditional Relative PE divergence by incorporating $\beta_i$, as defined in Eq.~\ref{beta}, making it more suitable for time series data. 
\end{itemize}

\subsection{Assumptions}\label{app:assumptions}
\begin{enumerate}
\item[\bf A1.] \textbf{Sufficiency}: There are no unobserved confounders.
\item[\bf A2.] \textbf{Causal Markov Condition}: Each variable $X$ is independent of all its non-descendants, given its parents $\Pa(X)$ in $\mathcal{G}$.
\item[\bf A3.] \textbf{Faithfulness Condition \citep{pearl1980causality}}: Let $P$ be a probability distribution generated by $\mathcal{G}$. $\langle \mathcal{G}, P \rangle$ satisfies the Faithfulness Condition if and only if every conditional independence relation true in $P$ is entailed by the Causal Markov Condition applied to $\mathcal{G}$.
\item[\bf A4.] \textbf{No Contemporaneous Causal Effects}: Edges between variables at the same time are not allowed. 
\item[\bf A5.] \textbf{Temporal Priority}: Causal relations always point from the past to the future.
\item[\bf A6.]\label{assum:A7} 
\textbf{Boundary Separation Assumption}: There must be a minimum buffer period at both the beginning and the end of the time series where change points cannot be detected. More specifically, the change point for each time series $\Xb^j\in V$ cannot occur within the specified window size $\nw$.
\item[\bf A7.]\label{assum:A6} 
\textbf{One change point per component time series/if multiple change points exist, the temporal distance between any two must be at least $\Delta_c$}: $\forall j$, $c^j=1$, that is, each time series $\Xb^j\in V$ has only one change point/$\forall j$, if $c^j>1$, then $\forall c\in[c^j],T^j_{c}-T^j_{c-1}>\Delta_c$.
\item[\bf A8.]\label{assum:A8} \textbf{The number of change points is known.}
\item[\bf A9.]\label{assum:A9}
\textbf{Minimum Pearson Divergence}: For each $\Xb^j \subsetneq V$, there should exist a window $W^i$ satisfying $PE^2_{\alpha\beta_i}>\lVert r_{\alpha\beta_i}\rVert_{\infty}^2+\frac{(1-\alpha\beta_i)^2\lVert r_{\alpha\beta_i}\rVert_{\infty}^4}{4}+\frac{\alpha^2\beta_i^2\lVert r_{\alpha\beta_i}\rVert_{\infty}^4}{4}$ in at least one time series segments $X^j(\Lambda)$.
\end{enumerate}

Assumptions \textbf{A1-A5} are conventional and commonly employed in causal discovery methods for time series data. Our approach requires specific Assumptions \textbf{A6-A9} to be in place. To clarify, assumption \textbf{A6} is essential because our algorithm utilizes a series of sliding windows to obtain the divergence score by measuring the first half and the second half of the samples within each window. If the change point is too close to the beginning or end of the time series, the divergence score will not be significant enough to be detected. Assumption \textbf{A7} is required since the sliding windows are not directly applied on the original time series. Instead, multiple sub-time series are created and hence it is hard to impose the constraint on the minimum distance among multiple change points, such as in \citep{harchaoui2009regularized}, \citep{allen2018non} and \citep{chen2023graph}. Assumption \textbf{A8} provides a theoretical guarantee. 

Assumption \textbf{A9} is necessary for the proof, as it guarantees a significant difference between two distinct causal mechanisms. This assumption is crucial because detecting the change point successfully becomes highly unlikely if the two causal mechanisms are extremely similar. More specifically, the threshold in Assumption \textbf{A9} is set as a constant for any half window size $\nw$ for simplicity, making it a conservative choice. Assumption \textbf{A9} ensures that, among comparable conditional distributions from two causal mechanisms, at least one pair has a large divergence, such that the confidence interval of $\widehat{PE}$ divergence score strictly excludes 0 with a confidence level of $1 - \frac{1}{\nw}$.

The threshold can also be set as a variable that vanishes at a rate of $\nw^{-(\frac{1}{2} - q)}$ with $0<\epsilon<q<1/2$, while the confidence interval of $\widehat{PE}$ will shrink correspondingly with a lower confidence level of $1 - \frac{1}{\nw^{2q}}$.

In the experimental setting, no threshold was applied during the data generation process described in Appendix~\ref{app:data generation}.
\subsection{Theoretical Guarantees}\label{app:Theoretical_Guarantees}
Theorem \ref{pcmci} ensures that in the initial step of our algorithm, a superset of the true union parent set $\SPA(X^j_t)$ can be obtained for all $j\in [n]$. This guarantees the correct construction of time series segments $X^j(\Lambda)$, with IID samples. Theorem \ref{zero} establishes that if the window $W_i$ contains samples from a single conditional distribution, the estimated relative Pearson Divergence $\widehat{PE}_{\alpha\beta_i}$ is close to zero with high probability. The lemma \ref{PE close to PE hat} states that, as $\nw$ increases, the estimated relative Pearson Divergence will be close to the true relative Pearson Divergence up to some constant with high probability. 
The lemma \ref{increasing} shows that the relative Pearson Divergence will achieve maximum if all the samples in the first half window are from one distribution denoted by $p$ and the samples in the second half window are all from another distribution denoted by $p'$. Theorem \ref{Final} establishes a confidence interval for the change point estimator $\widehat{T}^j_c$. 

\begin{theorem}\label{pcmci}
Let $\SPA(X^j_t)$ denote the union parent set of $X^j_t$ and $\widehat{\SPA}(X^j_t)$ denote the estimated union parent set obtained from PCMCI algorithm on time series $\Xb^j$ with a Mechanism-Shift SCM, and the change point $T^j_c$ satisfies $T^j_c=\frac{T}{2}$. Under assumptions \textbf{A1-A5,A7} and with an oracle (infinite sample size limit), we have that:
\begin{align}
  \SPA(X^j_t)\subseteq \widehat{\SPA}(X^j_t)
\end{align}
\end{theorem}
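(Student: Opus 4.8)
The plan is to prove the superset relation by showing that the oracle PCMCI procedure never deletes an edge that is genuinely present: every $X^i_{t-\tau}\in\SPA(X^j_t)$ survives all of PCMCI's conditional-independence tests and is therefore retained in $\widehat{\SPA}(X^j_t)$. Under Assumption~\textbf{A7} there is exactly one change point ($c^j=1$), so $\SPA(X^j_t)=\Pa_1(X^j_t)\cup\Pa_2(X^j_t)$, where $\Pa_1$ and $\Pa_2$ are the true parent sets in the pre- and post-change regimes. Since PCMCI removes a candidate edge only when it certifies a separating set, it suffices to show that for each true parent $X^i_{t-\tau}\in\SPA(X^j_t)$ and every candidate conditioning set $S$, the conditional independence $\ci{X^i_{t-\tau}}{X^j_t}{S}$ fails in the pooled distribution over $\Xb^j$.

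First I would recast the non-stationary series as a stationary augmented model by introducing a regime (context) variable $C$ with $C=1$ on the pre-change block $t<T^j_c$ and $C=2$ on the post-change block $t\ge T^j_c$. Because $T^j_c=T/2$, pooling the samples gives $\Pr(C=1)=\Pr(C=2)=\tfrac{1}{2}$, so both regimes carry strictly positive mass and $C$ is non-degenerate. In the augmented DAG $\mathcal{G}^+$ the mechanism of $X^j_t$ is $f_{j,t}^{(C)}$ with parent set $\Pa^{(C)}(X^j_t)$; hence $C$ is a parent of $X^j_t$ and the parent set of $X^j_t$ is exactly $\SPA(X^j_t)\cup\{C\}$. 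The balanced-split hypothesis is what ensures every edge of $\mathcal{G}^+$ is structurally active.

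The core step is a d-separation argument in $\mathcal{G}^+$. For any $X^i_{t-\tau}\in\SPA(X^j_t)$ there is a direct edge $X^i_{t-\tau}\to X^j_t$, and a single edge between two nodes cannot be blocked by conditioning on any set of other nodes; therefore $X^i_{t-\tau}$ and $X^j_t$ stay d-connected given every $S\subseteq V\setminus\{X^i_{t-\tau},X^j_t\}$, even though $S$ is necessarily restricted to observed variables and cannot contain $C$. Invoking Faithfulness (Assumption~\textbf{A3}) lifted to $\langle\mathcal{G}^+,P\rangle$ turns this d-connection into genuine statistical dependence, so $\ci{X^i_{t-\tau}}{X^j_t}{S}$ is false for every $S$. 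With oracle tests PCMCI then finds no separating set and keeps the edge; running this for every member of $\SPA(X^j_t)$ yields $\SPA(X^j_t)\subseteq\widehat{\SPA}(X^j_t)$.

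The main obstacle I anticipate is justifying faithfulness of the pooled/augmented model rather than of each regime separately: a dependence present in regime $1$ could in principle be cancelled by a complementary dependence in regime $2$, making the $\tfrac{1}{2}$--$\tfrac{1}{2}$ mixture spuriously independent given some $S$. Excluding such fine-tuned cancellation is exactly what augmented-graph faithfulness buys, and the balanced split $T^j_c=T/2$ is what guarantees $C$ is non-degenerate so the edges into $X^j_t$ remain active; I would therefore make the cancellation-exclusion the one explicit appeal to genericity. A secondary check is that PCMCI's restricted test schedule---its PC$_1$ condition-selection phase followed by the MCI phase---only ever deletes an edge upon certifying an independence, so retention of every true parent does not depend on which conditioning sets the algorithm happens to enumerate.
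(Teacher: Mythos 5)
Your proposal is correct in substance, but it takes a self-contained route where the paper does not: the paper's entire proof is a pointer to Lemmas~3.2 and~3.3 of the cited semi-stationary-SCM work, with the single added remark that the hypothesis $T^j_c=\tfrac{T}{2}$ makes the two mechanisms' samples balanced in the pooled data, so the periodicity-based balance argument of that reference carries over. You instead reconstruct the argument explicitly: introduce a regime variable $C$, form the augmented summary graph in which the parents of $X^j_t$ are $\SPA(X^j_t)\cup\{C\}$, observe that a direct edge cannot be d-separated by any conditioning set (in particular by any set PCMCI enumerates, which never contains $C$), and invoke faithfulness of the pooled distribution to conclude that oracle CI tests never certify a separating set, so no true member of $\SPA(X^j_t)$ is deleted. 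This buys a proof that a reader can check without consulting the reference, and it correctly isolates the one genuinely delicate point, which the paper leaves implicit: Assumption \textbf{A3} as stated is faithfulness of the distribution generated by the true time-varying graph $\mathcal{G}$, not of the $\tfrac12$--$\tfrac12$ mixture relative to the augmented graph $\mathcal{G}^+$, and excluding cross-regime cancellation in that mixture is an additional genericity requirement that both your argument and the cited lemmas need. One small imprecision: in the oracle limit, non-degeneracy of $C$ only requires each regime to carry positive mass, so the exact balance $T^j_c=\tfrac{T}{2}$ is not what your d-separation argument needs; it is an artifact of porting the balanced-sample condition from the cited finite-sample lemmas, and you could state your version of the theorem under the weaker hypothesis that both regimes occupy a constant fraction of $[T]$.
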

Theorem \ref{pcmci} asserts that when samples are balanced, with half originating from one causal mechanism and the remaining half from another, the estimated union parent set encompasses the true union parent set.
\begin{proof}
The proof follows the same logic as the Lemma 3.2 and 3.3 in \citep{gao2023causal}. In the semi-stationary SCM, the samples are from multiple causal mechanisms and due to periodicity, heterogeneous samples from different causal mechanisms are balanced. However, in the mechanism-shift SCM, as per Assumption \textbf{A7}, there are only two causal mechanisms, and the samples are inherently unbalanced without specific clarification. With additional assumption $T^j_c=\frac{T}{2}$, we can draw the same conclusion as in \citep{gao2023causal}.
\end{proof}
Note that $\widehat{PE}^{j,\Lambda}_{\alpha\beta_i}$ series with parameter $\alpha$ and window index $i$ is a function of $\nw$, $\ns$, time series index $j$ and time series segments index $\Lambda$, shown as Eq.~\ref{r_alphai}-\ref{beta}. For simplicity, we use $\widehat{PE}_{\alpha\beta_i}$ instead of $\widehat{PE}^{j,\Lambda}_{\alpha\beta_i}(\nw,\ns)$ in the next section. Furthermore, we need to emphasize that $\tsub$ is not the length of $V$ but the length of the corresponding specific time series segment $X^j(\Lambda)$ . 
\begin{theorem}\label{zero}
Let $\{\widehat{PE}_{\alpha\beta_i}\}_i$ be the estimated PE series for one time series segments $X^j(\Lambda)\subsetneq \Xb^j \subsetneq V$ and $T^j_c$ denote the true change point in this time series segments. Under assumption \textbf{A6-A7}, we have that $\forall i \in \{i: i\ns+2\nw-1<T^j_{c}\}$
\begin{align}
    P\bigl(\max_i \{\widehat{PE}_{\alpha\beta_i}\}_i < o_p(1)\bigr)>1-\frac{\aw-2}{\bs\log \tsub }-\frac{\aw}{\tsub} 
\end{align}
where $\bs=\lfloor \frac{\log \tsub}{\ns}\rfloor$ and $\aw=\lceil\frac{\tsub}{\nw}\rceil$.
\end{theorem}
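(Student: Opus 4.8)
The plan is to exploit the fact that, for any window satisfying $i\ns + 2\nw - 1 < T_{c}$, Assumptions \textbf{A6}--\textbf{A7} force all $2\nw$ samples of $W_i$ to be drawn from a single conditional distribution. First I would show that on such a window the two halves $W_i^1, W_i^2$ are IID draws from the same law $p(x)$, so $p = p'$ there; substituting into $r_{\alpha\beta_i}$ gives $r_{\alpha\beta_i}\equiv 1$ and hence the \emph{population} quantity $PE_{\alpha\beta_i} = 0$, independently of the (unknown) mixing weight $\beta_i$. The theorem therefore reduces to showing that the \emph{estimator} $\widehat{PE}_{\alpha\beta_i}$ concentrates around its zero population value, uniformly over all qualifying windows.

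Second, I would establish a per-window concentration bound. Because the domain $D$ is finite and each segment $X^j(\Lambda)$ is one-dimensional, the RuLSIF density-ratio estimate reduces to a smooth function of the empirical frequency vectors $\hat p$ and $\hat p'$ computed from the $\nw$ samples of $W_i^1$ and $W_i^2$. A Hoeffding/multinomial concentration inequality controls $\max_{d\in D}|\hat p(d) - p(d)|$ and $\max_{d\in D}|\hat p'(d) - p(d)|$ at scale $\sqrt{(\log \tsub)/\nw}$; since $\widehat{PE}_{\alpha\beta_i}$ vanishes when $\hat p = \hat p' = p$ and is Lipschitz in a neighbourhood of that point, this yields $|\widehat{PE}_{\alpha\beta_i}| = O_p\!\bigl(\sqrt{(\log \tsub)/\nw}\bigr) = o_p(1)$ for a fixed window, with a tail of order $1/\tsub$ once the deviation threshold is taken to be $\Theta\!\bigl(\sqrt{(\log \tsub)/\nw}\bigr)$.

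Third --- and this is the technical crux --- I would upgrade the fixed-window statement to the uniform bound over $i$ via a maximal inequality that exploits the overlap of the sliding windows. The key structural observation is that advancing $i$ by one shifts the window by only $\ns$ samples out of $\nw$, so consecutive values of $\widehat{PE}_{\alpha\beta_i}$ cannot change abruptly; this lets me group the windows into blocks. I would take $\aw = \lceil \tsub/\nw\rceil$ blocks of length $\nw$ and note that each $\log \tsub$-span of the index axis contains $\bs = \lfloor (\log \tsub)/\ns\rfloor$ windows. A union bound over one representative per block contributes the $\aw/\tsub$ term, while controlling the within-block fluctuation across the $\bs$ windows of each of the (at most $\aw - 2$ interior) blocks contributes the $\frac{\aw-2}{\bs \log \tsub}$ term; summing the two failure contributions yields exactly $1 - \frac{\aw-2}{\bs \log \tsub} - \frac{\aw}{\tsub}$.

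I expect the bookkeeping in the third step to be the main obstacle: a naive union bound over all $\approx \tsub/\ns$ overlapping windows would be far too lossy, so the argument must carefully balance the per-block representative bound against the within-block variation bound and choose the deviation threshold (of order $\sqrt{(\log \tsub)/\nw}$) so that both contributions take the stated form while the resulting bound on $\max_i |\widehat{PE}_{\alpha\beta_i}|$ still vanishes as $\nw$ grows. By contrast, the reduction in the first step and the fixed-window concentration in the second step are routine once discreteness is used to replace the kernel density-ratio estimator by empirical frequencies.
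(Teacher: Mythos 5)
Your first step matches the paper: for every window with $i\ns+2\nw-1<T_{c}$ the population divergence $PE_{\alpha\beta_i}$ is zero, and the theorem reduces to concentration of the estimator. But your third step --- which you yourself identify as the crux --- rests on a false premise and is never actually carried out. The paper's proof \emph{is} the ``naive'' union bound you dismiss as far too lossy: it quotes the asymptotic moments of $\widehat{PE}_{\alpha\beta_i}$ from Yamada et al.\ ($\mathbb{E}(\widehat{PE}_{\alpha\beta_i})=o_p(1/\sqrt{\nw})$ and $\mathbb{V}(\widehat{PE}_{\alpha\beta_i})=o_p(1/\nw)$), applies Chebyshev with $k=\sqrt{\nw}$ to get a per-window failure probability of $1/k^2=1/\nw$, and then union-bounds over all $\lfloor(\tsub-2\nw)/\ns\rfloor+1$ qualifying windows. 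Substituting $\nw=\tsub/\aw$ and $\ns=\bs\log\tsub$, the quantity $(\text{number of windows})\cdot\frac{1}{\nw}$ splits algebraically into $\frac{\aw-2}{\bs\log\tsub}+\frac{\aw}{\tsub}$; the two terms are the two summands of a single union bound, not contributions from two different mechanisms. Your attribution of $\aw/\tsub$ to ``one representative per block'' and of $\frac{\aw-2}{\bs\log\tsub}$ to ``within-block fluctuation'' is reverse-engineered from the target expression, and you give no argument for why the fluctuation of $\widehat{PE}_{\alpha\beta_i}$ across the $\bs$ overlapping windows of a block would fail with probability $\frac{1}{\bs\log\tsub}$ --- that is the entire content of the step, and it is missing.

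There is also a quantitative mismatch in your second step that makes the blocking device unnecessary even on your own terms: if the per-window tail is $\Theta(1/\tsub)$, as your Hoeffding threshold of order $\sqrt{(\log\tsub)/\nw}$ would give, then the plain union bound over all $\approx\tsub/\ns$ windows already costs only about $1/\ns$ and does not produce the stated expression. To land on the theorem's bound you need the per-window tail to be exactly $1/\nw$, which is what Chebyshev at deviation threshold $\sqrt{\nw}\cdot o_p(1/\sqrt{\nw})=o_p(1)$ delivers. The repair is therefore to drop the blocking argument, take the per-window tail to be $1/\nw$ (either via the cited asymptotic variance and Chebyshev, or by retuning your empirical-frequency bound), and apply the direct union bound over all qualifying windows.
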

The window index $i$ satisfying $i\ns+2\nw-1<T_{c}$ guarantees that all the samples in $W_i$ are collected from the same distribution. Theorem \ref{zero} states that the maximum estimated PE score obtained from such windows are bounded by any positive constant with probability $1-\frac{\aw-2}{\bs\log \tsub }-\frac{\aw}{\tsub}$ if $\nw$ are larger than some threshold. In other words, $ \forall k>0, \exists N$ such that $\forall  \nw \geq N$:
\begin{align}
    P(\max_i \{\widehat{PE}_{\alpha\beta_i}\}_i< k)>1-\frac{\aw-2}{\bs\log \tsub }-\frac{\aw}{\tsub}
\end{align}

\begin{proof}
$\forall i \in \{i: i\ns+2\nw-1<T^j_c\}$,  the asymptotic expectation and variance of $\widehat{PE}_{\alpha\beta_i}$ is given by:
\begin{align}
\mathbb{E}(\widehat{PE}_{\alpha\beta_i})&=PE_{\alpha\beta_i}+o_p(\frac{1}{\sqrt{\nw}})\\
&=o_p(\frac{1}{\sqrt{\nw}})\\
\mathbb{V}(\widehat{PE}_{\alpha\beta_i})&=o_p(\frac{1}{\nw})\label{theorem}
\end{align}
The proof of the asymptotic expectation and variance can be found in Section B of \citep{yamada2013relative}.
By Chebyshev's inequality, we have:
\begin{align}
p\bigl(|\widehat{PE}_{\alpha\beta_i}|\geq ko_p(\frac{1}{\sqrt{\nw}})\bigr)\leq \frac{1}{k^2}  
\end{align}
Denote $A_i$ as the event that $|\widehat{PE}_{\alpha\beta_i}|\geq ko_p(\frac{1}{\sqrt{\nw}})$. By Boole's inequality, the union bound of the series for $i \in \{i: i\ns+2\nw-1<T^j_c\}$ is given by:
\begin{align}
P\bigl(\cup_{i=0}^{\lfloor\frac{T^j_c-2\nw+1}{\ns}\rfloor}A_i\bigr)&\leq \sum_{i=0}^{\lfloor\frac{T^j_c-2\nw+1}{\ns}\rfloor}p\bigl(A_i\bigr)\leq\frac{\lfloor\frac{T^j_c-2\nw+1}{\ns}\rfloor+1}{k^2} \\
P\bigl(\cap_{i=0}^{\lfloor\frac{T^j_c-2\nw+1}{\ns}\rfloor}A_i^c\bigr)
&\geq 1-\frac{\lfloor\frac{T^j_c-2\nw+1}{\ns}\rfloor+1}{k^2} 
\end{align}
That is, the probability that the maximum value of $\{\widehat{PE}_{\alpha\beta_i}\}_{i=0}^{i\ns+2\nw-1<T^j_c}$ is less than $ko_p(\frac{1}{\sqrt{\nw}})$ is larger than $1-\frac{\lfloor\frac{T^j_c-2\nw+1}{\ns}\rfloor+1}{k^2}$.
The same results hold for $\forall i \in \{i: i\ns\geq T^j_c\}$.

Let $\ns=\bs\log \tsub$, $k^2=\nw$ and make $\nw$ a proportion of $\tsub$, which can be denoted by $\frac{\tsub}{\aw}$ where $\bs$ and $a_s$ are both constants. 

We have:
\begin{align}
P\bigl(\cap_{i=0}^{\lfloor\frac{T^j_c-\frac{2\tsub}{\aw}+1}{\bs\log \tsub}\rfloor}A_i^c\bigr)
> 1-\frac{\aw-2}{\bs\log \tsub }-\frac{\aw}{\tsub}
\end{align}
In other words, the likelihood that the maximum $PE_{\alpha\beta_i}$, encompassing all samples before $T^j_c$ or after $T^j_c$, is less than $o(1)$ is greater than $1-\frac{\aw-2}{\bs\log \tsub }-\frac{\aw}{\tsub}$.
More specifically:
\begin{align}
    P\bigl(\max_i \{\widehat{PE}_{\alpha\beta_i}\}_i < o_p(1)\bigr)>1-\frac{\aw-2}{\bs\log \tsub }-\frac{\aw}{\tsub}
\end{align}
\end{proof}

\begin{lemma}\label{PE close to PE hat}
Let $\{\widehat{PE}_{\alpha\beta_i}\}_i$ be the estimated PE series for one time series segments $X^j(\Lambda)\subsetneq \Xb^j \subsetneq V$ and $T^j_c$ denote the true change point in this time series segments. Under assumption \textbf{A6-A7}, we have that $\forall i \in \{i: T^j_c<i\ns+2\nw-1<T^j_c+\nw\}$,
\begin{align}
&p\bigl(PE_{\alpha\beta_i}-c_i-(\sqrt{\nw}-1)o_p(\frac{1}{\sqrt{\nw}})\leq\widehat{PE}_{\alpha\beta_i}\\&\leq PE_{\alpha\beta_i}+c_{i}+(\sqrt{\nw}+1)o_p(\frac{1}{\sqrt{\nw}}) \bigr)> 1-\frac{1}{\nw} 
\end{align}
where $c_i^2\coloneqq\lVert r_{\alpha\beta_i}\rVert_{\infty}^2+\frac{(1-\alpha\beta_i)^2\lVert r_{\alpha\beta_i}\rVert_{\infty}^4}{4}+\frac{\alpha^2\beta_i^2\lVert r_{\alpha\beta_i}\rVert_{\infty}^4}{4}$.
\end{lemma}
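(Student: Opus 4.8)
The window index condition $T^j_c < i\ns + 2\nw - 1 < T^j_c + \nw$ forces the change point to lie inside the second half $W^2_i$, so the first half $W^1_i$ is drawn cleanly from $p$ while $W^2_i$ is drawn from the mixture $(1-\beta_i)p + \beta_i p'$. My first step is to argue that, although the RuLSIF density-ratio fit uses the fixed parameter $\alpha$ and treats $W^2_i$ as if it were a sample of $p'$, its effective reference density is exactly $q_{\alpha\beta_i} = (1-\alpha)p + \alpha\bigl[(1-\beta_i)p + \beta_i p'\bigr] = (1-\alpha\beta_i)p + \alpha\beta_i p'$, so that $\hat{r} \to r_{\alpha\beta_i}$ and $\widehat{PE}_{\alpha\beta_i}$ is a bona fide estimator of $PE_{\alpha\beta_i}$. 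I would then split $\widehat{PE}_{\alpha\beta_i} - PE_{\alpha\beta_i}$ into the centered fluctuation $\widehat{PE}_{\alpha\beta_i} - \mathbb{E}[\widehat{PE}_{\alpha\beta_i}]$ and the expectation residual $\mathbb{E}[\widehat{PE}_{\alpha\beta_i}] - PE_{\alpha\beta_i}$, controlling each and recombining with the triangle inequality.

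For the expectation residual I would invoke the RuLSIF asymptotics of \cite{yamada2013relative} already used in Theorem \ref{zero}, namely $\mathbb{E}[\widehat{PE}_{\alpha\beta_i}] = PE_{\alpha\beta_i} + o_p(1/\sqrt{\nw})$; this supplies the single stray factor of $o_p(1/\sqrt{\nw})$ that appears on both sides of the claim. For the fluctuation I would apply Chebyshev's inequality exactly as in Theorem \ref{zero}, with deviation scale $k = \sqrt{\nw}$, which produces the confidence level $1 - 1/\nw$. The crucial difference from Theorem \ref{zero} is that here $p \neq p'$, so $r_{\alpha\beta_i} \not\equiv 1$ and the estimator's spread does not collapse: the non-vanishing envelope of this $\sqrt{\nw}$-scaled deviation is what I would bound by $c_i$. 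Expanding $\widehat{PE}_{\alpha\beta_i}$ into its three empirical-mean components — the linear term $\frac{1}{\nw}\sum_{W^1_i}\hat{r}$ and the two quadratic terms tied to the reference $q_{\alpha\beta_i}$ with weights $(1-\alpha\beta_i)$ and $\alpha\beta_i$ — and controlling the variance contribution of each through $\lVert r_{\alpha\beta_i}\rVert_\infty$ (using $\mathbb{V}[Y]\leq \tfrac14(\text{range})^2$ for the bounded quantities $r_{\alpha\beta_i}$ and $r_{\alpha\beta_i}^2$) yields exactly the three summands $\tfrac{\lVert r_{\alpha\beta_i}\rVert_\infty^2}{4}$, $\tfrac{(1-\alpha\beta_i)^2\lVert r_{\alpha\beta_i}\rVert_\infty^4}{16}$, and $\tfrac{\alpha^2\beta_i^2\lVert r_{\alpha\beta_i}\rVert_\infty^4}{16}$ that define $c_i$.

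Putting the two pieces together on the same high-probability event gives $-c_i - \sqrt{\nw}\,o_p(1/\sqrt{\nw}) + o_p(1/\sqrt{\nw}) \leq \widehat{PE}_{\alpha\beta_i} - PE_{\alpha\beta_i} \leq c_i + \sqrt{\nw}\,o_p(1/\sqrt{\nw}) + o_p(1/\sqrt{\nw})$; since the expectation residual $+o_p(1/\sqrt{\nw})$ enters with the same sign on both sides while the $\pm c_i$ and $\pm\sqrt{\nw}\,o_p(1/\sqrt{\nw})$ envelopes flip, this reassembles into the stated asymmetric coefficients $(\sqrt{\nw}-1)$ on the lower bound and $(\sqrt{\nw}+1)$ on the upper bound, all holding with probability exceeding $1 - 1/\nw$. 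I expect the main obstacle to be the $c_i$ bound itself: because $W^2_i$ is a mixture rather than clean $p'$ draws, I must verify both that the effective-reference identity holds and that the plug-in error $\hat{r} - r_{\alpha\beta_i}$ stays within the $o_p(1/\sqrt{\nw})$ order in this non-standard regime, and then match the three variance bounds to the precise weights appearing in $c_i$ — the same threshold that Assumption \textbf{A8} requires $PE_{\alpha\beta_i}$ to exceed, which is exactly what keeps the lower bound bounded away from zero at a genuine change point.
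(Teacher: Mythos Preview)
Your proposal is correct and follows essentially the same route as the paper's own proof: state the asymptotic mean $\mathbb{E}[\widehat{PE}_{\alpha\beta_i}] = PE_{\alpha\beta_i} + o_p(1/\sqrt{\nw})$ and variance bound $\mathbb{V}[\widehat{PE}_{\alpha\beta_i}] \le c_i/\nw + o_p(1/\nw)$ inherited from \cite{yamada2013relative}, apply Chebyshev with $k=\sqrt{\nw}$, and read off the asymmetric $(\sqrt{\nw}\mp 1)$ coefficients. The paper simply quotes the three-term variance expression rather than deriving it, whereas you sketch its origin from the three empirical-mean components of the estimator; apart from that added detail your argument and the paper's coincide.
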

The window index $i$ satisfying $T^j_c<i\ns+2\nw-1<T^j_c+\nw$ guarantees that all the samples in $W^1_i$ are collected from $p(x)$ and samples in $W^1_2$ are from a mixture distribution $(1-\beta_i)p(x)+\beta_ip'(x)$. Lemma \ref{PE close to PE hat} states that the estimated error between the estimated PE series and the true PE value is smaller than a constant with probability $1-\frac{1}{\nw}$ if $\nw$ are larger than some threshold. In other words, $ \forall k>0, \exists N$ such that $\forall  \nw \geq N$:
\begin{align}
    P(|\widehat{PE}_{\alpha\beta_i}-PE_{\alpha\beta_i}|< k+c_i)>1-\frac{1}{\nw}
\end{align}
\begin{proof}

$\forall i \in \{i: T^j_c<in_s+2\nw-1<T^j_c+\nw\}$,  the asymptotic expectation and variance of $\widehat{PE}_{\alpha\beta_i}$ is given by:
\begin{align}
\mathbb{E}(\widehat{PE}_{\alpha\beta_i})&=PE_{\alpha\beta_i}+o_p(\frac{1}{\sqrt{\nw}})\label{expectation}\\
\mathbb{V}(\widehat{PE}_{\alpha\beta_i})&\leq \frac{\lVert r_{\alpha\beta_i}\rVert_{\infty}^2}{\nw}+\frac{(1-\alpha\beta_i)^2\lVert r_{\alpha\beta_i}\rVert_{\infty}^4}{4\nw}+\frac{\alpha^2\beta_i^2\lVert r_{\alpha\beta_i}\rVert_{\infty}^4}{4\nw}+o_p(\frac{1}{\nw}).
\end{align}

For simplicity, let $\sigma_{i}^2\coloneqq \frac{\lVert r_{\alpha\beta_i}\rVert_{\infty}^2}{\nw}+\frac{(1-\alpha\beta_i)^2\lVert r_{\alpha\beta_i}\rVert_{\infty}^4}{4\nw}+\frac{\alpha^2\beta_i^2\lVert r_{\alpha\beta_i}\rVert_{\infty}^4}{4\nw}.$

By Chebyshev's inequality, we have:
\begin{align}
p\bigl(|\widehat{PE}_{\alpha\beta_i}-PE_{\alpha\beta_i}-o_p(\frac{1}{\sqrt{\nw}})|\geq k(\sigma_{i}+o_p(\frac{1}{\sqrt{\nw}}))\bigr)\leq \frac{1}{k^2}  \\
p\bigl(|\widehat{PE}_{\alpha\beta_i}-PE_{\alpha\beta_i}-o_p(\frac{1}{\sqrt{\nw}})|< k(\sigma_{i}+o_p(\frac{1}{\sqrt{\nw}}))\bigr)\geq 1-\frac{1}{k^2}  \label{confidence interval}
\end{align}

Hence we have:
\begin{align}
&p\bigl(PE_{\alpha\beta_i}-k\sigma_{i}-(k-1)o_p(\frac{1}{\sqrt{\nw}})\leq\widehat{PE}_{\alpha\beta_i}\leq PE_{\alpha\beta_i}+k\sigma_{i}+(k+1)o_p(\frac{1}{\sqrt{\nw}}) \bigr)> 1-\frac{1}{k^2} 
\end{align}

Let $k=\sqrt{\nw}$ and make $\nw$ a proportion of $T$, which can be denoted by $\frac{T}{a_w}$ where $a_s$ is a constant. 

We have:
\begin{align}
&p\bigl(PE_{\alpha\beta_i}-c_i-(\sqrt{\nw}-1)o_p(\frac{1}{\sqrt{\nw}})\leq\widehat{PE}_{\alpha\beta_i}\leq PE_{\alpha\beta_i}+c_{i}+(\sqrt{\nw}+1)o_p(\frac{1}{\sqrt{\nw}}) \bigr)\\
&> 1-\frac{1}{\nw} 
\end{align}
where $c_i^2\coloneqq \lVert r_{\alpha\beta_i}\rVert_{\infty}^2+\frac{(1-\alpha\beta_i)^2\lVert r_{\alpha\beta_i}\rVert_{\infty}^4}{4}+\frac{\alpha^2\beta_i^2\lVert r_{\alpha\beta_i}\rVert_{\infty}^4}{4}.$

\end{proof}

Since we only discuss the discrete distribution, therefore we can assume that there are $k$ realizations of any variable $X^j_t$ from $1$ to $k$. 
For simplicity, we denote $p(x=k)$ as $p_h$ and $p'(x=k)$ as $p'_h$.
\begin{lemma}\label{increasing}
Let $\{PE_{\alpha\beta_i}\}_i$ be the estimated PE series for one time series segments $X^j(\Lambda)\subsetneq \Xb^j \subsetneq V$ and $T^j_c$ denote the true change point in this time series segments. Under assumption \textbf{A6-A7}, we have that $\forall i \in \{i: T^j_c<i\ns+2\nw-1<T^j_c+\nw\}$,
\begin{itemize}
    \item $PE_{\alpha\beta_i}>0$
    \item $PE_{\alpha\beta_i}$ is a monotonically increasing function regarding $i$ (or $\beta_i$).
    \item $\max PE_{\alpha\beta_i}=\frac{1}{2}\sum^{k}_{h=1}\frac{p_h^2}{(1-\alpha\beta^{*})p_h+\alpha\beta^{*}p'_h}-\frac{1}{2}$ where $\beta^{*}$ is the largest proportion of samples from $p'(x)$ over those second windows $\{W^2_i\}_{i: T^j_c<i\ns+2\nw-1<T^j_c+\nw}$. For appropriate $\nw$ and $\ns$, $\max PE_{\alpha\beta_i}$ can achieve the maximum when all samples of $W^1_i$ are from $p(x)$ and all samples of $W^2_i$ are from $p'(x)$, that is, $\max_{i} \beta_i = 1$.
\end{itemize}
\end{lemma}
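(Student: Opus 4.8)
The plan is to first reduce $PE_{\alpha\beta_i}$ to the closed form appearing in the third bullet, and then read off all three claims from elementary properties of that expression. Writing $q_h \coloneqq (1-\alpha\beta_i)p_h + \alpha\beta_i p'_h$ and $r_h \coloneqq p_h/q_h$, I would expand $PE_{\alpha\beta_i} = \frac{1}{2}\sum_{h=1}^k q_h(r_h-1)^2$ and use the three identities $\sum_h q_h = 1$, $\sum_h q_h r_h = \sum_h p_h = 1$, and $\sum_h q_h r_h^2 = \sum_h p_h^2/q_h$, which hold because $q_h$ is a genuine mixture distribution. This collapses the divergence to
\begin{align*}
    PE_{\alpha\beta_i} = \frac{1}{2}\sum_{h=1}^k \frac{p_h^2}{(1-\alpha\beta_i)p_h + \alpha\beta_i p'_h} - \frac{1}{2},
\end{align*}
which is exactly the quantity to be analyzed.

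For positivity, I would apply the Cauchy--Schwarz inequality with $a_h = p_h/\sqrt{q_h}$ and $b_h = \sqrt{q_h}$, giving $1 = (\sum_h p_h)^2 \le (\sum_h p_h^2/q_h)(\sum_h q_h) = \sum_h p_h^2/q_h$, so $PE_{\alpha\beta_i}\ge 0$. Equality holds only when $p_h \propto q_h$, i.e.\ $p = q_{\alpha\beta_i}$, which forces $p = p'$ once $\alpha\beta_i > 0$. Since the window straddles the change point, its two halves are governed by distinct causal mechanisms, so $p \neq p'$; together with $\beta_i > 0$ (the change point lies strictly inside the window by the range constraint on $i$), this makes the inequality strict and yields $PE_{\alpha\beta_i} > 0$.

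The monotonicity claim is the crux, and I would handle it by treating $\beta$ as a continuous variable on $[0,1]$ and setting $q_h(\beta) = p_h + \alpha\beta(p'_h - p_h)$. Each summand $p_h^2/q_h(\beta)$ is the reciprocal of a positive affine function of $\beta$ and is therefore convex, so $PE(\beta)$ is convex. A direct attempt to sign $PE'(\beta) = -\tfrac{\alpha}{2}\sum_h p_h^2(p'_h - p_h)/q_h(\beta)^2$ fails because the sum is not termwise signed; the clean route is to evaluate it at $\beta = 0$, where $q_h(0) = p_h$ and the derivative reduces to $-\tfrac{\alpha}{2}\sum_h (p'_h - p_h) = 0$ using $\sum_h p'_h = \sum_h p_h = 1$. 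Convexity then forces $PE'(\beta) \ge PE'(0) = 0$ for all $\beta \ge 0$, so $PE$ is nondecreasing in $\beta$. To transfer this to the index $i$, I would note that in the admissible range $T^j_c < i\ns + 2\nw - 1 < T^j_c + \nw$ the change point lies in the second half $W^2_i$, and as $i$ grows the second half slides rightward so that its share $\beta_i$ of samples drawn from $p'$ increases; composing the two monotonicities gives monotonicity of $PE_{\alpha\beta_i}$ in $i$.

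Finally, monotonicity immediately places the maximum at the largest attained proportion $\beta^*$, and substituting $\beta^*$ into the closed form gives the stated expression for $\max PE_{\alpha\beta_i}$. For the last assertion I would argue that for suitably chosen $\nw$ and $\ns$ there exists a window whose split point coincides with $T^j_c$, so that $W^1_i$ contains only $p$-samples and $W^2_i$ only $p'$-samples, i.e.\ $\beta_i = 1$; since $\beta = 1$ is the right endpoint and $PE$ is increasing, this window realizes the global maximum. I expect the main obstacle to be the monotonicity step: the naive derivative computation is inconclusive, and the argument hinges on recognizing convexity together with the vanishing of $PE'(0)$ (itself a consequence of $\sum_h(p'_h - p_h)=0$), after which the window-geometry bookkeeping that links $\beta_i$ to $i$ is routine but must be stated carefully.
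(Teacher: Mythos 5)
Your proposal is correct and follows essentially the same route as the paper: both reduce $PE_{\alpha\beta_i}$ to $\frac{1}{2}\sum_h p_h^2/q_h(\beta) - \frac{1}{2}$ and obtain monotonicity from convexity of that expression in $\alpha\beta$ together with the vanishing of its derivative at zero (via $\sum_h(p'_h-p_h)=0$), which is exactly the paper's argument. Your only deviation is proving positivity directly by Cauchy--Schwarz (with a strictness discussion) rather than reading it off from $f(\alpha)\ge f(0)=1$ as the paper does; this is a minor variation, and if anything your version is slightly cleaner since it avoids the ratios $p'_h/p_h$ that the paper's computation implicitly assumes are well defined.
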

Lemma \ref{increasing} suggests that the true PE series derived from the sequence of sliding windows is a monotonically increasing function with respect to $\beta_i$ if the samples in the second half of the sliding windows are from a mixture distribution, indicating that the change point is located within the second half of the sliding windows. Similarly, employing the same reasoning, the true PE series becomes a monotonically decreasing function with respect to $\beta_i$ if the samples in the first half of the windows are from a mixture distribution, that is, $i \in \{i: T^j_c+\nw <i\ns+2\nw-1<T^j_c+2\nw\}$. In that case, $\beta_i$ represents the proportion of samples from $p(x)$ over the first half windows.
\begin{proof}
Treat $PE_{\alpha\beta_i}$ as a function of $\alpha\coloneqq \alpha\beta_i$. 
\begin{align}
PE_{\alpha\beta_i} & =
\frac{1}{2}\mathbb{E}_{p(x)}[r_{_{\alpha\beta_i}}(x)]-\frac{1}{2}\\
&=\frac{1}{2}\sum^{k}_{h=1}\frac{p_h}{(1-\alpha)p_h+\alpha p'_h}p_h-\frac{1}{2}\\
&=\frac{1}{2}\sum^{k}_{h=1}\frac{p_h}{1+(\frac{p'_h}{p_h}-1)\alpha}-\frac{1}{2}
\end{align}
We have:
\begin{align}
f(\alpha) &\coloneqq \sum^{k}_{h=1}\frac{p_h}{1+(\frac{p'_h}{p_h}-1)\alpha}\\
f'(\alpha) &= \sum^{k}_{h=1}\frac{-p_h(\frac{p'_h}{p_h}-1)}{(1+(\frac{p'_h}{p_h}-1)\alpha)^2}\\
f''(\alpha) 
&= \sum^{k}_{h=1}\frac{2p_h(\frac{p'_h}{p_h}-1)^2}{(1+(\frac{p'_h}{p_h}-1)\alpha)^3}
\end{align}
Since $\frac{p'_h}{p_h}>0$ and $0\leq \alpha \leq 1$, we have $1+(\frac{p'_h}{p_h}-1)\alpha>0$ for any $j\in [k]$, hence $f''(\alpha)>0$, leading to an increasing function $f'(\alpha)$. 
\begin{align}
    f'(\alpha)\geq f'(0) = \sum^{k}_{h=1}(p_h-p'_h)=0
\end{align}
Hence $f(\alpha)$ is an increasing function in terms of $\alpha$ and 
\begin{align}
    f(\alpha)\geq f(0) = \sum^{k}_{h=1}p_h = 1 \text{, with }\alpha\in [0,1]
\end{align}    
\end{proof}
\begin{theorem}\label{Final}
Let $\{\widehat{PE}_{\alpha\beta_i}\}_i$ be the estimated PE series for one time series segments $X^j(\Lambda)\subsetneq \Xb^j \subsetneq V$ and $T^j_{c}$ denote the true change point in this time series segments. $\widehat{T}_{c}^j$ denotes the estimator of $T^j_{c}$ obtained by:
\begin{align}
    \widehat{T}_{c}^j=\arg_i\max\{\widehat{PE}_{\alpha\beta_i}\}_i
\end{align}
Under assumption \textbf{A6-A9}, we have that given large enough $\nw$, $\forall i \in [\tau_\text{max}+1,T]$
\begin{align}
     P\bigl(|\widehat{T}_{c}^j-T^j_c| < 2\nw\bigr)>(1-\frac{\aw-2}{\bs\log \tsub }-\frac{\aw}{\tsub})(1-\frac{1}{\nw})
\end{align}
where $\bs=\lfloor \frac{\log \tsub}{\ns}\rfloor$ and $\aw=\lceil\frac{\tsub}{\nw}\rceil$.
\end{theorem}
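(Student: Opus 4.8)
The plan is to show that, on a high-probability event, the window index maximizing the estimated divergence $\widehat{PE}_{\alpha\beta_i}$ must be a window that straddles the true change point $T^j_c$; since every such window has span exactly $2\nw$ and contains $T^j_c$, this immediately forces $|\widehat{T}_c^j - T^j_c| < 2\nw$. The argument stitches together the three preceding results: Theorem~\ref{zero} kills the estimated divergence on windows lying entirely on one side of $T^j_c$, while Lemma~\ref{increasing}, Lemma~\ref{PE close to PE hat}, and Assumption~\textbf{A8} together guarantee that at least one window straddling $T^j_c$ produces a strictly positive estimated divergence that survives the estimation error.

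First I would isolate the event $E_1$ supplied by Theorem~\ref{zero}: for every window $W_i$ whose samples come from a single conditional distribution (i.e. $W_i$ lies entirely before or entirely after $T^j_c$), $\max_i |\widehat{PE}_{\alpha\beta_i}| < o_p(1)$, and this holds with probability at least $1 - \frac{\aw-2}{\bs\log\tsub} - \frac{\aw}{\tsub}$. On $E_1$, no one-sided window carries appreciable divergence. Next I would exhibit a witness window $W_{i^*}$ near the change point. By Lemma~\ref{increasing}, the true divergence $PE_{\alpha\beta_i}$ is strictly positive and peaks precisely for windows straddling $T^j_c$, so any window with nonvanishing true divergence must contain $T^j_c$; by Assumption~\textbf{A8} there is such a window with $PE_{\alpha\beta_{i^*}} > c_{i^*}$, where $c_{i^*}$ is exactly the constant appearing in Lemma~\ref{PE close to PE hat}. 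Invoking the lower half of the confidence bound in Lemma~\ref{PE close to PE hat} then yields, on an event $E_2$ of probability at least $1 - \frac{1}{\nw}$,
\[
\widehat{PE}_{\alpha\beta_{i^*}} \;\geq\; PE_{\alpha\beta_{i^*}} - c_{i^*} - (\sqrt{\nw}-1)\,o_p\!\left(\frac{1}{\sqrt{\nw}}\right) \;>\; 0,
\]
where positivity follows because $PE_{\alpha\beta_{i^*}} - c_{i^*}$ is a fixed positive gap that dominates the vanishing error term once $\nw$ is large enough.

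I would then combine the two events. On $E_1 \cap E_2$, the maximum of the estimated series is at least $\widehat{PE}_{\alpha\beta_{i^*}} > 0$, whereas every window not straddling $T^j_c$ falls below the $o_p(1)$ threshold of $E_1$; for $\nw$ large enough the witness value exceeds this threshold, so the maximizer $\widehat{T}_c^j$ cannot be a one-sided window and must straddle $T^j_c$. Since a straddling window has length $2\nw$ and contains $T^j_c$, we obtain $|\widehat{T}_c^j - T^j_c| < 2\nw$. Treating the randomness governing $E_1$ (the one-sided windows) and $E_2$ (the witness window) as independent gives $\Pr(E_1 \cap E_2) = \Pr(E_1)\Pr(E_2)$, which is exactly the claimed product bound.

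I expect the main obstacle to lie in the comparison step: one must verify that the fixed positive gap $PE_{\alpha\beta_{i^*}} - c_{i^*}$ simultaneously dominates the $o_p$ error terms inherited from Lemma~\ref{PE close to PE hat} and the near-zero bound from Theorem~\ref{zero}, which is precisely where the ``large enough $\nw$'' hypothesis is consumed, and in justifying the factorization of the two probabilities into a product, since the sliding windows overlap and the associated estimators are not manifestly independent.
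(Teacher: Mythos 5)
Your proposal follows essentially the same route as the paper's proof: Theorem~\ref{zero} bounds the estimated divergence on one-sided windows, Assumption~\textbf{A8} together with Lemma~\ref{PE close to PE hat} produces a straddling witness window whose estimated divergence exceeds that bound, Lemma~\ref{increasing} locates the peak at the change point, and the two events are multiplied under an independence claim to give the stated product. The two obstacles you flag at the end (the gap having to dominate both error terms simultaneously, and the independence of the two events despite overlapping windows) are exactly the points the paper's proof also passes over with the same brief justifications, so your reconstruction is faithful to the published argument.
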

\begin{proof}
From Theorem \ref{zero},  $ \forall k_1>0, \exists N_1$ such that $\forall   \nw \geq N_1$, we have $\forall i_1 \in \{i_1: i_1\ns+2\nw-1<T_{c}\}$:
    \begin{align}
        &P\bigl(\max_{i_1} \{\widehat{PE}_{\alpha\beta_{i_1}}\}_{i_1} < k_1\bigr)>1-\frac{\aw-2}{\bs\log \tsub }-\frac{\aw}{\tsub}\label{theorem 4.1}
     \end{align}
From Lemma \ref{PE close to PE hat}, $ \forall k_2>0, \exists N_2$ such that $\forall  \nw \geq N_2$, we have $\forall i_2 \in \{i: T^j_c<i_2\ns+2\nw-1<T^j_c+\nw\}$:
    \begin{align}
        &P(|\widehat{PE}_{\alpha\beta_{i_2}}-PE_{\alpha\beta_{i_2}}|< k_2+c_{i_2})>1-\frac{1}{\nw}
    \end{align}
With Assumption $\textbf{A9}$, $\exists i_2$ such that $PE_{\alpha\beta_{i_2}}>c_{i_2}$, hence $\exists k_1,k_2>0$:
\begin{align}
    PE_{\alpha\beta_{i_2}}\geq c_{i_2}+k_2+k_1
\end{align}
Hence we can replace $k_1$ with $PE_{\alpha\beta_{i_2}}-c_{i_2}-k_2$ in Eq.\ref{theorem 4.1} and then we have:
\begin{align}
  &P\bigl(\max_{i_1} \{\widehat{PE}_{\alpha\beta_{i_1}}\}_{i_1} < PE_{\alpha\beta_{i_2}}-c_{i_2}-k_2\bigr)>1-\frac{\aw-2}{\bs\log \tsub }-\frac{\aw}{\tsub}\label{new theorem 4.1}  
\end{align}
From Lemma \ref{PE close to PE hat}, we have:
\begin{align}
    PE_{\alpha\beta_{i_2}}-k_2-c_{i_2}<\widehat{PE}_{\alpha\beta_{i_2}}<PE_{\alpha\beta_{i_2}}+k_2+c_{i_2}\label{new lemma}
\end{align}
holds with probability $1-\frac{1}{\nw}$.

As samples in $W_{i_1}$ and $W_{i_2}$ are IID, the events in Eq.\ref{new theorem 4.1} and Eq.\ref{new lemma} are independent, resulting in:
\begin{align}
    P(\max_{i_1} \{\widehat{PE}_{\alpha\beta_{i_1}}\}_{i_1}<\widehat{PE}_{\alpha\beta_{i_2}})>(1-\frac{\aw-2}{\bs\log \tsub }-\frac{\aw}{\tsub})(1-\frac{1}{\nw})
\end{align}
for $\forall i_1 \in \{i_1: i_1\ns+2\nw-1<T_{c}\}$ and $\exists i_2 \in \{i: T^j_c<i_2\ns+2\nw-1<T^j_c+\nw\}$.

From Lemma \ref{increasing}, we know that $\exists \beta^*$ such that $PE_{\alpha\beta^{*}}=\max PE_{\alpha\beta_{i_2}}$. Denote $i^*_2 = \arg_{i_2}\max PE_{\alpha\beta_{i_2}}$. 

Since $PE_{\alpha\beta_{i_2}}$ is a monotonically increasing function regarding $\beta_{i_2}$, then the true change point $T^j_c$ must satisfying:
\begin{align}
 PE_{\alpha\beta_{\frac{T^j_c-\nw}{\ns}}}\geq PE_{\alpha\beta^{*}} \geq\widehat{PE}_{\alpha\beta_{i_1}}\label{PE Tc},
\end{align}
$\forall i_1 \in \{i: i_1\ns+2\nw-1<T^j_c\}$ as $\ns$, $\nw$ and $T$ has jointly determined $\beta_i$.
The proof for situations that $\{i_1: i_1\ns\geq T^j_c\}$ and $i_2 \in \{i: T^j_c+\nw <i\ns+2\nw-1<T^j_c+2\nw\}$ follows the same logic. 

With Eq.\ref{PE Tc}, we finally have:
\begin{align}
     P\bigl(|\widehat{T}_{c}^j-T^j_c| < 2\nw\bigr)>(1-\frac{\aw-2}{\bs\log \tsub }-\frac{\aw}{\tsub})(1-\frac{1}{\nw})
\end{align}

\end{proof}
\section{Related work}\label{app:related work}

Identifying causal relationships in stationary time series is more challenging than in IID samples, and finding these relationships in non-stationary time series is even harder. In our scenario, when change points occur, the $\textit{Mechanism-Shift}$ SCM leads to a non-stationary time series. Each mechanism in this model corresponds to a stationary time series. Once we accurately detect the change points, we can divide the non-stationary time series into multiple stationary components. This simplifies the task of finding causal relationships in non-stationary time series to that of discovering them in stationary time series. Therefore, our algorithm can be viewed as a method for causal discovery in non-stationary time series, driven by change point detection.

Numerous efforts have been made recently to adapt causal discovery algorithms originally designed for IID data to work with stationary time series data, such as \citep{runge2019detecting}, \citep{entner2010causal}, \citep{hyvarinen2010estimation}, \citep{peters2013causal} and \citep{pamfil2020dynotears}.

Most causal discovery methods for non-stationary time series rely on parametric models. Examples include the vector autoregressive model used in \citep{gong2015discovering} and \citep{malinsky2019learning}, as well as linear and non-linear state-space models in \citep{huang2019causal}. Other approaches focus on linear causal relationships, as seen in \citep{saggioro2020reconstructing}. A non-parametric method called CD-NOD, described in \citep{huang2020causal}, can identify time-lagged causal relationships in non-stationary time series. Additionally, \citep{fujiwara2023causal} introduced the JIT-LiNGAM algorithm, which combines the LiNGAM approach with a JIT framework to create a local approximated linear causal model for non-linear and non-stationary data. In \citep{gao2023causal}, it is assumed that the underlying causal mechanisms of each time series change sequentially and periodically.

We would like to thoroughly clarify the differences between the setting in \citep{gao2023causal} and our paper to prevent any potential confusion.

Our work Causal-RuLSIF and the previous work of PCMCI$_\Omega$ in \citep{gao2023causal} are designed for distinct settings. We highlight two significant differences below.

\textbf{Periodicity} In PCMCI$_\Omega$, the Structural Causal Model (SCM) underlying the non-stationary time series is defined for a specific type of non-stationarity called semi-stationary time series. These are characterized by a finite number of different causal mechanisms occurring periodically over time. In contrast, our work does not assume such periodicity but rather a change point in the causal mechanism.

For example, consider two different causal mechanisms, denoted as $A$ and $B$. The shifting causal mechanisms over time can be expressed as $[A, B, A, B, A, B, \cdots]$. PCMCI$_\Omega$ can detect the periodicity of such changes. On the other hand, Causal-RuLSIF is designed for scenarios where a time series $\textbf{X}^j \in V$ transitions from one causal mechanism to another without periodicity, such as $[A, A, A, \cdots, A, B, B, \ldots, B]$. Causal-RuLSIF identifies the change point where the mechanism shifts from $A$ to $B$. While Causal-RuLSIF can not be applied to settings like $[A, B, A, B, A, B, \cdots]$, PCMCI$_\Omega$ also fails to handle settings like $[A, A, A, \cdots, A, B, B, \cdots, B]$.

\textbf{Mechanism Change} Additionally, PCMCI$_\Omega$ relies on the Hard Mechanism Change Assumption, which requires that the incoming edges of causal mechanisms $A$ and $B$ do not share time-invariant parent sets, reflecting a time-variant structure. In contrast, Causal-RuLSIF does not impose constraints on the causal graph structure; it directly assesses changes in the conditional distribution using a distribution divergence score. As a result, Causal-RuLSIF can handle both hard and soft mechanism changes.





\section{Generate Binary-valued Time Series}\label{app:data generation}
The generation process has three steps.
\begin{enumerate}
  \item Determine time series length $T$, number of time series $n$ of the multivariate time series, data domain $D$, maximum time lag $\tau_{\max}$. Randomly generate $T^j_{c}$, $j\in[n]$, satisfying Definition \ref{def:Mechanism-Shift} and Assumption $\textbf{A6-A7}$. 
  Additionally, in order to guarantee enough samples in each time series segment, we also control the size of the union parent set, that is, $|\SPA{X^j_t}|$.

  With $n$, $\tau_{\max}$ and the number of change points for each component time series $c^j$, one binary edge array with dimension $[n,c^j+1,n,\tau_{\max}+1]$ is randomly generated, where the first $n$ denotes the index of parent variable, $c^j$ denotes the number of change point, the second $n$ represents the index of target variable (child) and $\tau_max+1$ denotes the time lag. With Assumption $\textbf{A7}$, $c^j=1$ for all $j\in [n]$. $1$ in this binary edge array means that there is an edge from the parent variable to the child variable with the corresponding time lag; $0$ means that there is no cause-effect between two corresponding variables.
  
For a soft mechanism change, the parent set remains the same before and after the change point, without intersecting other change points, so the binary edge arrays representing the different causal mechanisms should be identical. In contrast, for a hard mechanism change, at least two of these binary edge arrays must differ.
  
  \item  With the randomly generated binary edge array controlled by $|\SPA{X^j_t}|$, a full causal graph for this $n$-variate time series is obtained. Given this edge array and data domain $D$, parent configuration matrices $\{\Sigma^j\}_{j\in[n]}$ are obtained. Corresponding to $\{\Sigma^j\}_{j\in[n]}$, conditional probability tables (CPTs) are randomly generated. 
  

  \item Fill the starting points $\{X^j_{t\leq \tau_{\max}}\}_{j\in[n]}$ with randomly generated binary data. Starting from time point $t>\tau_{\max}$, generate vector $\Xb_{t}$ over time according to the CPTs, until $t$ achieves $T$.
\end{enumerate}

Please note that since our algorithm can be extended to handle multiple change points, the generation process can also be easily extended to accommodate such settings.

\section{Additional Experiments}\label{app:simulations}
From Fig.\ref{fig3:a} and Fig.\ref{fig3:b}, it's apparent that a large $\nw$ may affect the performance of Causal-RuLSIF. This issue still stems from sample efficiency. The total number of windows for a specific time series segment $X^j(\Lambda)$ is given by $\lfloor\frac{|\tsub|-2\nw}{\ns}\rfloor+1$, where $\tsub = |X^j(\Lambda)|$. Increasing $\nw$ decreases the total number of windows, resulting in less information collected, as smaller $\nw$ is sufficient to estimate $PE$ scores with kernel functions.
\begin{figure}[t!]
\centering     
\subfigure[Influence of $\nw$ on estimation error]{\label{fig3:a}\includegraphics[height=55mm,width=65mm]{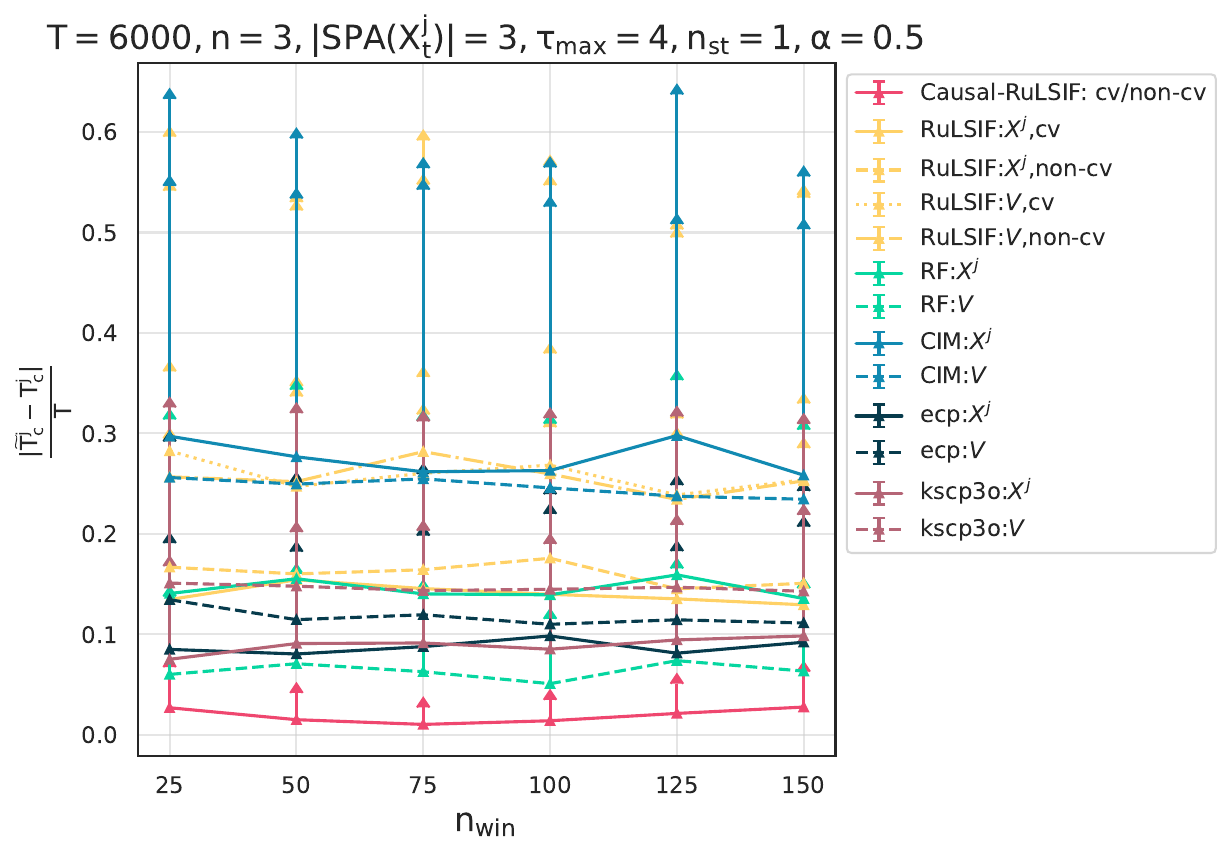}}
\subfigure[ROC curves for different $\nw$]{\label{fig3:b}\includegraphics[height=55mm,width=73mm]{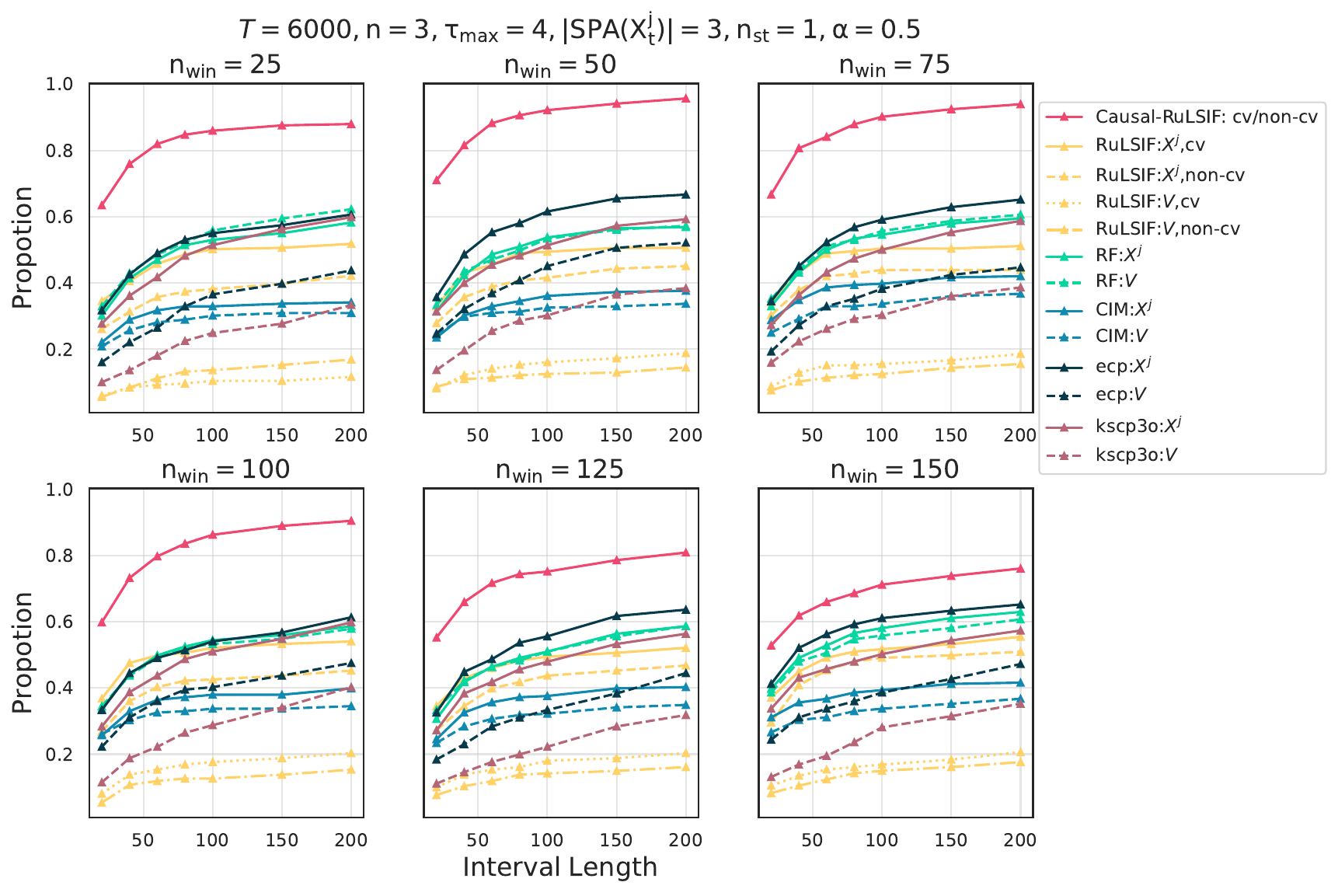}}
\caption{Causal-RuLSIF is tested on 3-multivariate binary time series with $T=6000,\tau_{\max}=4,|\SPA(X^j_t)|=3,\ns=1$ with \emph{hard mechanism change}. Every line with a different color corresponds to a different algorithm and different linestyle corresponds to a different setting. $X^j$ in the legend means the algorithm is applied to each component time series while $V$ means the algorithm is used for the whole $n$-variate time series $V$. Every marker corresponds to the average error rate or average accuracy rate over 100 random trials. The error bar represents the standard error for the averaged statistics. a) Influence of $\nw$ on estimation error $\frac{|\widetilde{T}^j_c-T^j_c|}{T}$. b) ROC curves for different $\nw$.}
\end{figure}

As demonstrated in Fig.\ref{fig5:a} and Fig.\ref{fig5:b}, the performance of Causal-RuLSIF remains consistent regardless of the distance between the change points $T^1_c$ of $\Xb^1$ and $T^2_c$ of $\Xb^2$.
\begin{figure}[t!]
\centering     
\subfigure[Influence of $|T^1_c-T^2_c|$ on estimation error]{\label{fig5:a}\includegraphics[height=52mm,width=70mm]{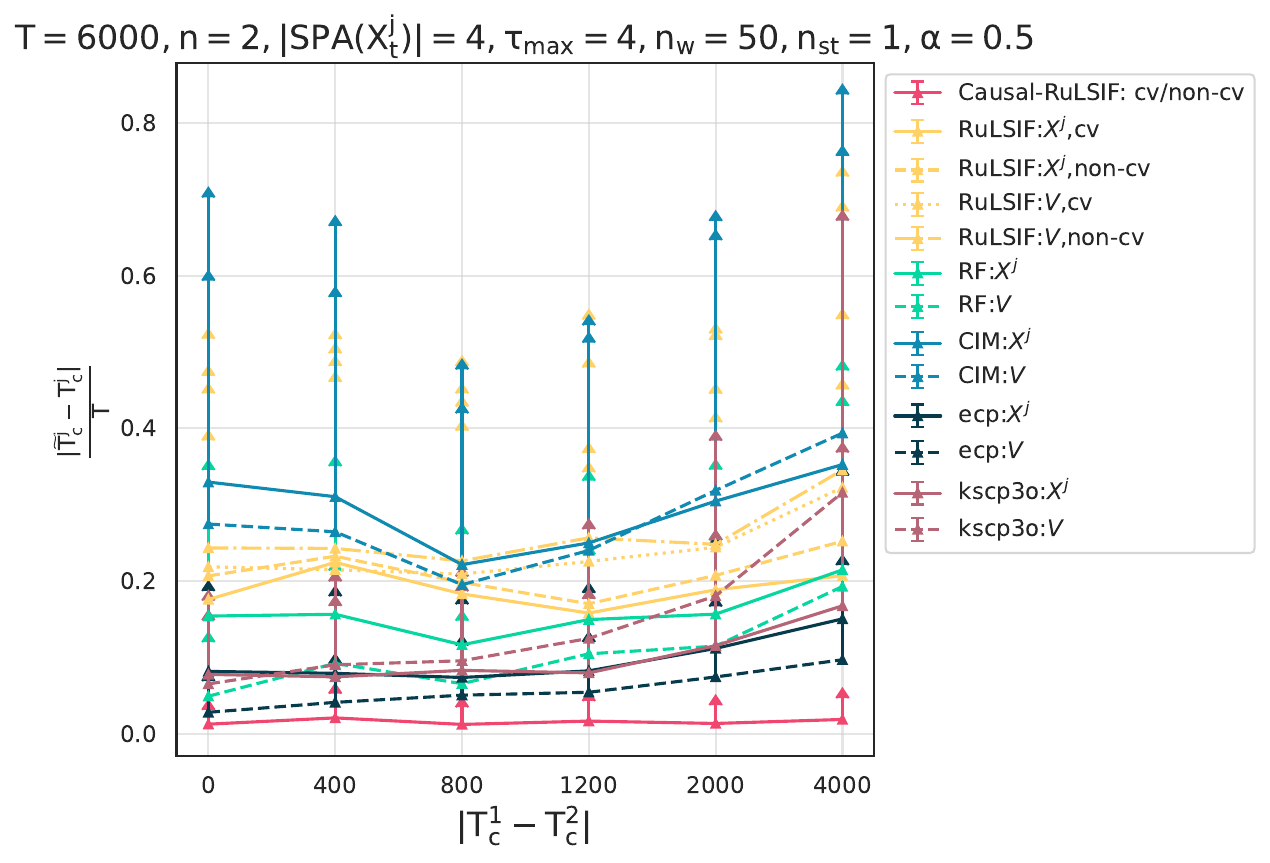}}
\subfigure[ROC curves for different $|T^1_c-T^2_c|$]{\label{fig5:b}\includegraphics[height=52mm,width=73mm]{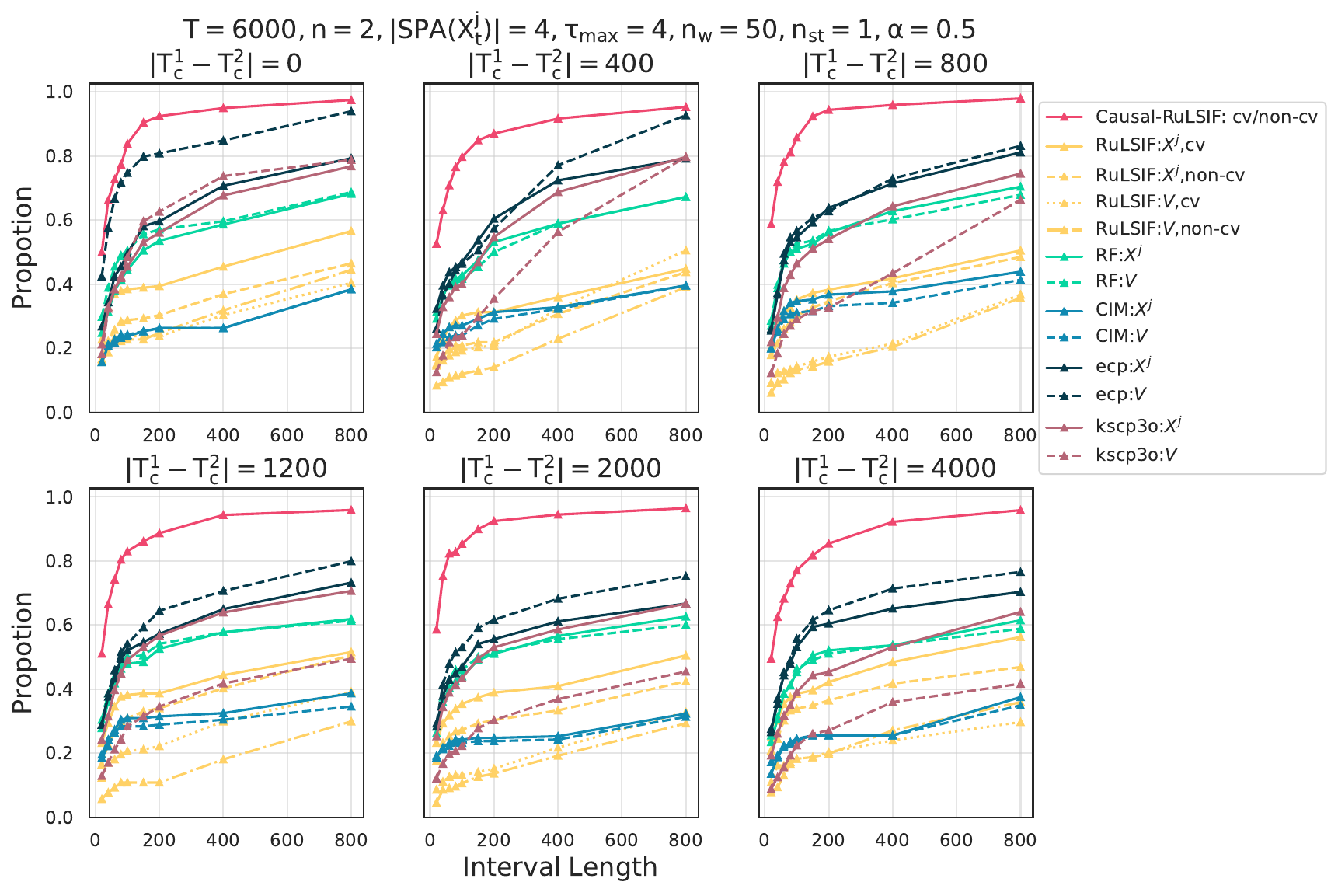}}
\caption{Causal-RuLSIF is tested on 3-multivariate binary time series with $T=6000,\tau_{\max}=4,|\SPA(X^j_t)|=3,\nw=50,\ns=1$ with \emph{soft mechanism change}. Every line with a different color corresponds to a different algorithm and different linestyle corresponds to a different setting. $X^j$ in the legend means the algorithm is applied to each component time series while $V$ means the algorithm is used for the whole $n$-variate time series $V$. Every marker corresponds to the average error rate or average accuracy rate over 100 random trials. The error bar represents the standard error for the averaged statistics. a) Influence of $|T^1_c-T^2_c|$ on estimation error $\frac{|\widetilde{T}^j_c-T^j_c|}{T}$. b) ROC curves for different $|T^1_c-T^2_c|$.}
\end{figure}

Fig. \ref{fig 6} shows the running time needed for each algorithm in seconds. Cross-validation techniques is time-consuming. It is beneficial that cross-validation is not needed for binary time series data for Causal-ReLSIF.
Additionally, in order to have a better illustration of implementing dynamic divergence measurement on time series segments, we have conducted experiments on randomly generated binary time series and plot the PE divergence series obtained from the time series segments. 

\begin{figure}[t!]
\centering     
\subfigure[Runtime (in sec.) for algorithms]{\label{fig 6}\includegraphics[height=52mm,width=68mm]{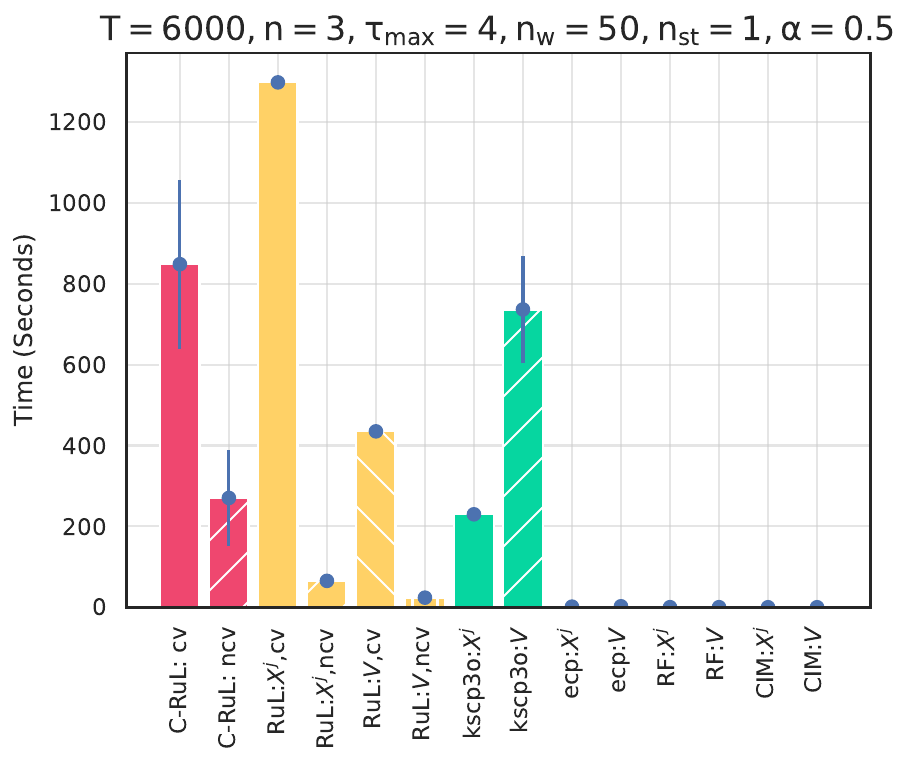}}
\subfigure[One estimated PE divergence series obtained on one time series segment]{\label{fig:PE score}\includegraphics[height=52mm,width=80mm]{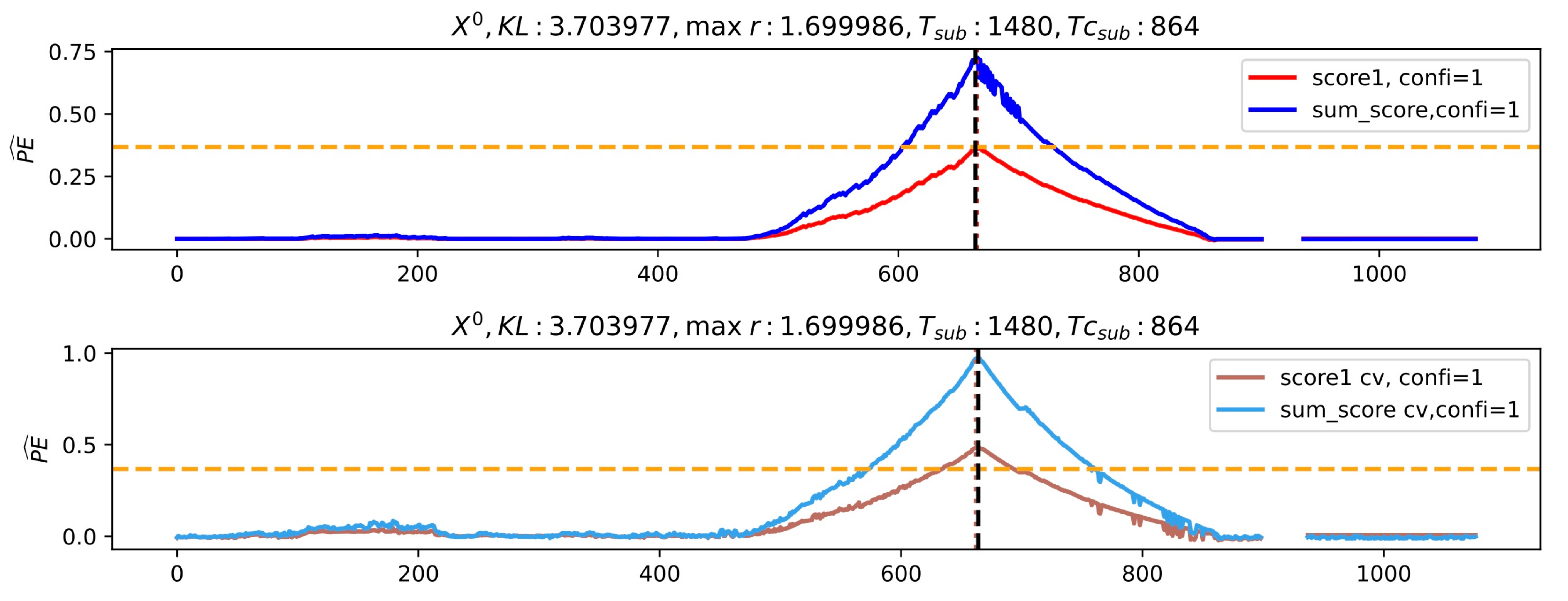}}
\caption{Runtime and one illustration of time series segment.}
\end{figure}

The estimated PE divergence series shown in Fig.~\ref{fig:PE score} for one time series segment verifies Theorem \ref{main: theorem 4.1}, Lemma \ref{increasing} and Theorem \ref{main: theorem 4.2} empirically. For window index $i$ smaller than around 500 or larger than around $900$, $\hat{PE}_i$ score is very close to zero. For window index $i$ between $500$ and $900$, $\widehat{PE}_i$ first monotonically increases and then monotonically decreases over the window index $i$ and the maximum value is achieved at $\widehat{T}_c=864$, which equals to the true change point in this time series segment. Furthermore, the maximum of the PE score line is equal to the true PE value, which is denoted by the dashed orange horizontal line. In the title of the subgraphs, $KL$ represents the Kullback-Leibler (KL) divergence between the two conditional distributions of $X^0$ before and after the change point, while $\max r$ indicates the maximum density ratio $\frac{p(x)}{(1-\alpha\beta_i)p(x)+\alpha\beta_ip'(x)}$ over all $x$ in its domain.

\section{Sample Efficiency: k-PC and Top-K parents}\label{app:k-PC and Top-K}
As noted in the main paper, sample efficiency is a fundamental issue in our algorithm. The effective sample size of each time series segment, $|T_\text{sub}| \approx T/|s^{|\widehat{S\Pa}(X^j_t)|}|$, decreases exponentially with domain size $s$ and parent size $\widehat{S\Pa}(X^j_t)$, whereas the baseline method has a sample size of $T$. This is a necessary trade-off if we focus on the $\textbf{causal mechanims}$ shift rather than the shift in $\textbf{joint distributions}$.

In practice, we can address this issue through two approaches: the k-PC algorithm and the top-K causal strength selection.

Specifically, we can improve the power of the CI tests by applying the k-PC algorithm from \citep{kocaoglu2024characterization} during the PC stage of PCMCI, which restricts the size of the conditioning set to k. Furthermore, when generating time series segments with $\SPA(X^j_t)$, we can directly limit the size of $\SPA(X^j_t)$ by selecting only the parents with the top $K$ causal strengths based on the statistics from the MCI tests in PCMCI. As noted in \citep{runge2019detecting}, the MCI test statistic can be interpreted as a measure of causal strength, enabling the meaningful ranking of causal links in large-scale studies.

Given a large multivariate time series dataset with $n = 4, 6, 8, 10$ univariate time series, Fig.~\ref{fig:topk error} and ~\ref{fig:topk pro} demonstrate that more accurate time series segments result in more accurate estimates of $\widehat{T^j_c}$, highlighting the need for causal-driven change point detection under the \textit{Mechanism-shift} SCM. Our proposed algorithm, Causal-RuLSIF with the Top 1 parent, may perform worse than some baselines due to sample efficiency limitations. Additionally, in this experiment, when no significant parent is identified during causal discovery, we select the parent with the highest causal influence, even if it lacks significance, which can further decrease the performance of the Top 1 parent method. However, Causal-RuLSIF with the Top 3 parents outperforms other baselines.
\begin{figure}[t!]
\centering     
\subfigure[Influence of selecting top-K parents on estimation error]{\label{fig:topk error}\includegraphics[height=52mm,width=70mm]{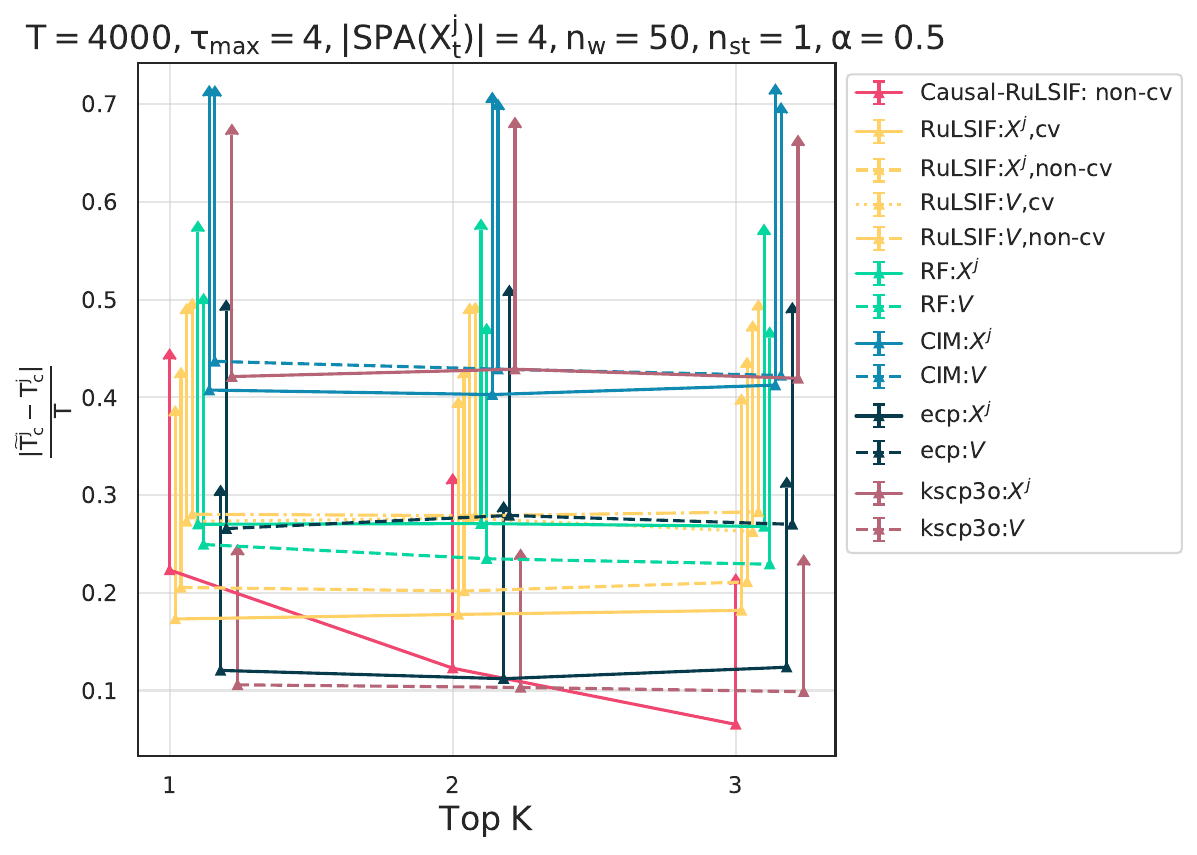}}
\subfigure[ROC curves for different $n$-variate time series]{\label{fig:topk pro}\includegraphics[height=52mm,width=73mm]{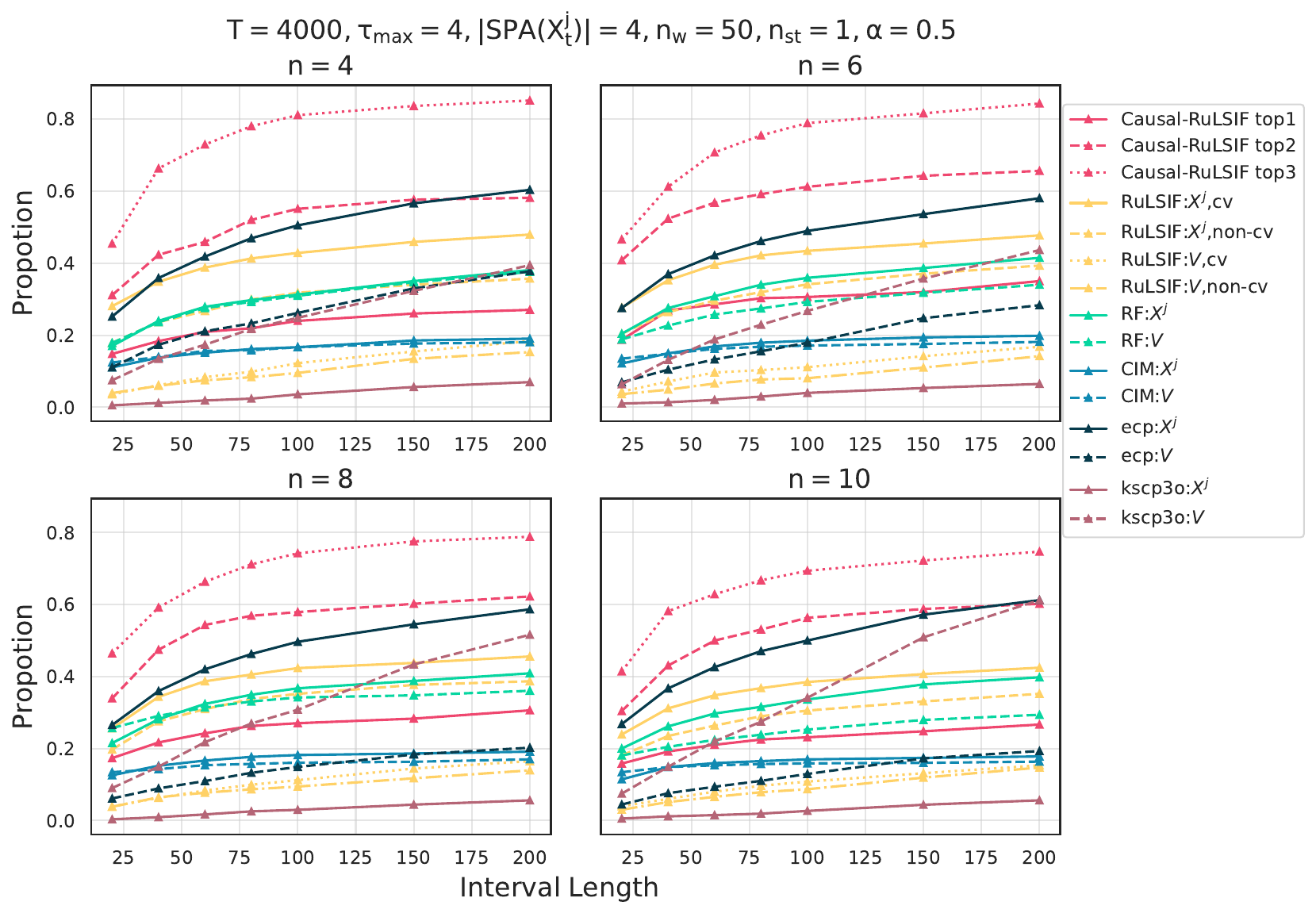}}
\caption{Causal-RuLSIF is tested on $n$-variate time series where $T=6000,\tau_{\max}=4,|\SPA(X^j_t)|=4,\ns=1, n=[4,6,8,10]$ with \emph{soft mechanism change}. The domain size $s=3$. Every line with a different color corresponds to a different algorithm and different linestyle corresponds to a different setting. $X^j$ in the legend means the algorithm is applied to each component time series while $V$ means the algorithm is used for the whole $n$-variate time series $V$. Every marker corresponds to the average error rate or accuracy rate over 50 random trials. The error bar represents the standard error for the averaged statistics. a) Influence of Top-$K$ parents on estimation error $\frac{|\widetilde{T}^j_c-T^j_c|}{T}$. b) ROC curves for different $n$-variate time series.}
\end{figure}

\section{Multiple Change Points}\label{app:multiple change points}

Our algorithm can be readily extended to handle multiple change point problems, subject to \textbf{Assumption A7}.

If the temporal distance between two consecutive change points in time series segments exceeds the window size, i.e., $\Delta_c > 2\nw$, the theoretical guarantee for detecting each individual change point, as discussed in the main paper, remains valid.

Fig.~\ref{fig: mcp error} and Fig.~\ref{fig: mcp pro} illustrate the performance of algorithms on time series where each univariate time series contains at most $4$ change points, specifically, $c^j \leq 4$ for $j \in [3]$. Our algorithm outperforms the other baselines.

\begin{figure}[t!]
\centering     
\subfigure[Estimation error]{\label{fig: mcp error}\includegraphics[height=52mm,width=70mm]{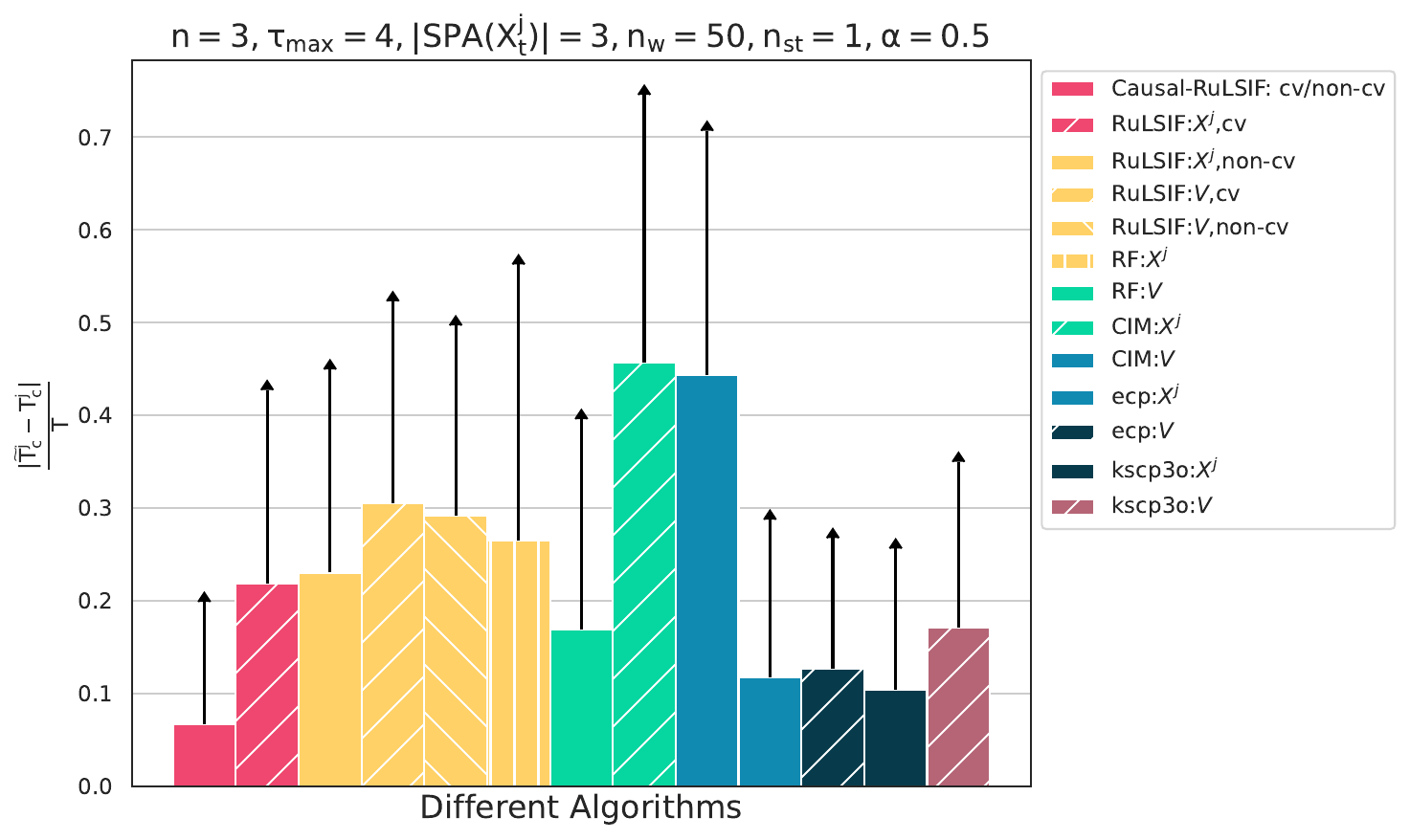}}
\subfigure[ROC curves]{\label{fig: mcp pro}\includegraphics[height=52mm,width=73mm]{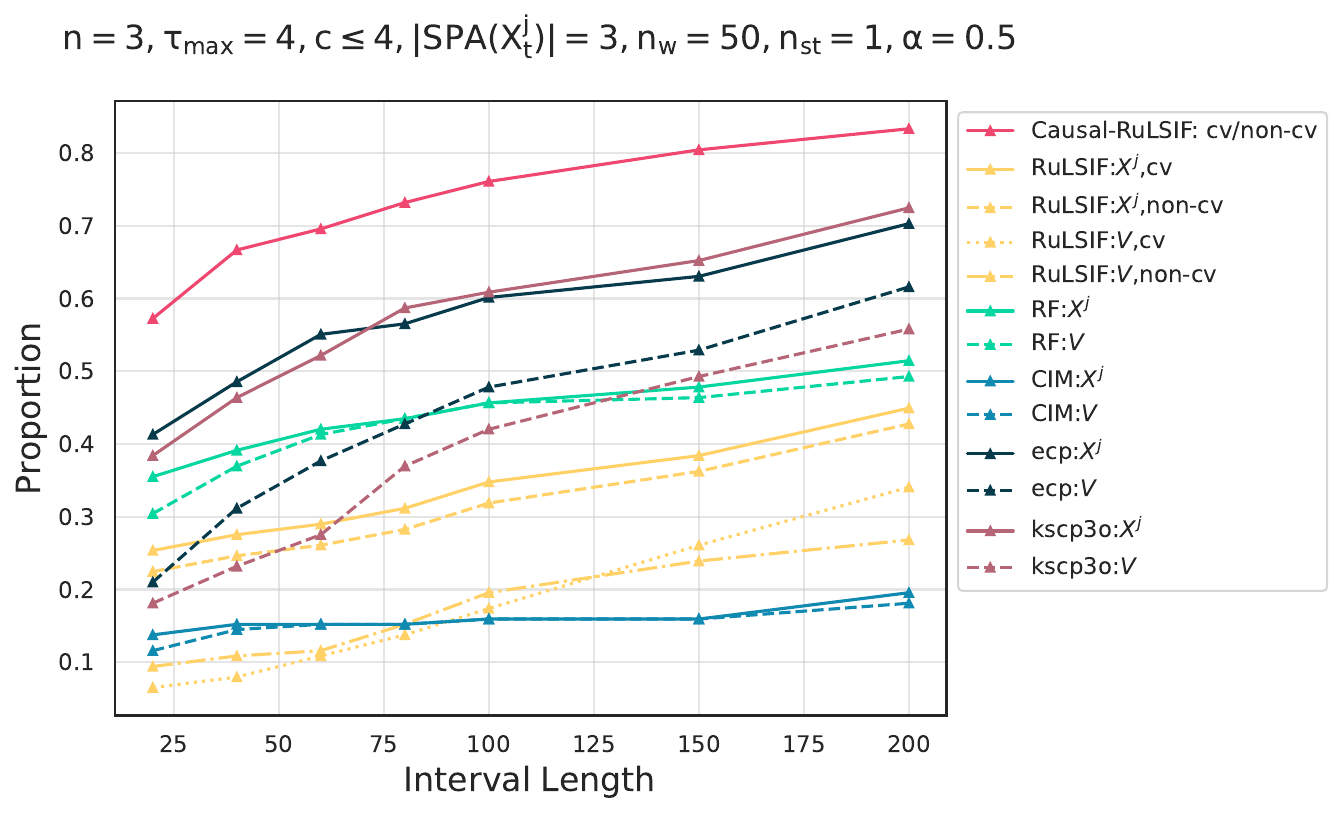}}
\caption{Causal-RuLSIF is tested on 3-multivariate binary time series with $T=6000,\tau_{\max}=4,|\SPA(X^j_t)|=3,\nw=50,\ns=1$ with \emph{soft mechanism change}. Every line with a different color corresponds to a different algorithm and different linestyle corresponds to a different setting. $X^j$ in the legend means the algorithm is applied to each component time series while $V$ means the algorithm is used for the whole $n$-variate time series $V$. Every marker corresponds to the average error rate or average accuracy rate over 50 random trials. The error bar represents the standard error for the averaged statistics. a) Estimation error with different algorithms. b) ROC curves for different algorithms.}
\end{figure}

\section{Case Study: A Cautionary Counterexample}\label{app:case study}

When applying our causal-driven algorithm to a real dataset, it is crucial to ensure that there are causal relationships among the \(n\)-variate time series. If no causal relationships exist within the \(n\)-variate time series, we do not recommend using our algorithm. This is because the identified causal relationships may not be meaningful, and the efficiency of the sample will diminish due to the presence of misleading or meaningless causal relationships.

Additionally, please carefully check whether all the assumptions in section~\ref{app:assumptions} are met. 

To illustrate this warning, we conduct a case study on a human activity dataset, as used in \citep{alanqary2021change}. This dataset was collected using a portable three-axis accelerometer on human beings. The three dimensions of the time series reflect the acceleration recorded by the device along the x, y, and z axes. The change points indicate transitions between six activities: standing, walking, jogging, skipping, climbing up stairs, and climbing down stairs. We select a sequence of length $1000$ that contains only one change point. 

First, common sense suggests that there are no clear causal relationships among this $3$-variate time series. Second, \textbf{Assumption A1} is violated since the three axes are affected by human movement, which implies the existence of an unobserved confounder.

We continue to apply causal-RuLSIF to this dataset to highlight its inappropriateness. As illustrated in Fig.~\ref{fig:human}, while the estimated change point \(\widetilde{T}^j_c\) on the \(y\)-axis and \(z\)-axis is close to the true change point \(T^j_c\), this does not confirm the accuracy of the estimates; instead, it demonstrates the robustness of the causal-driven algorithm. Additionally, the deviation of \(\widetilde{T}^j_c\) on the \(x\)-axis is consistent with our previous observations. In this type of dataset, we should concentrate on the shift in the \textbf{joint distribution} of the time series rather than the \textbf{causal mechanism} shift.

\begin{figure}[h!]
    \centering
    \includegraphics[width=0.5\linewidth]{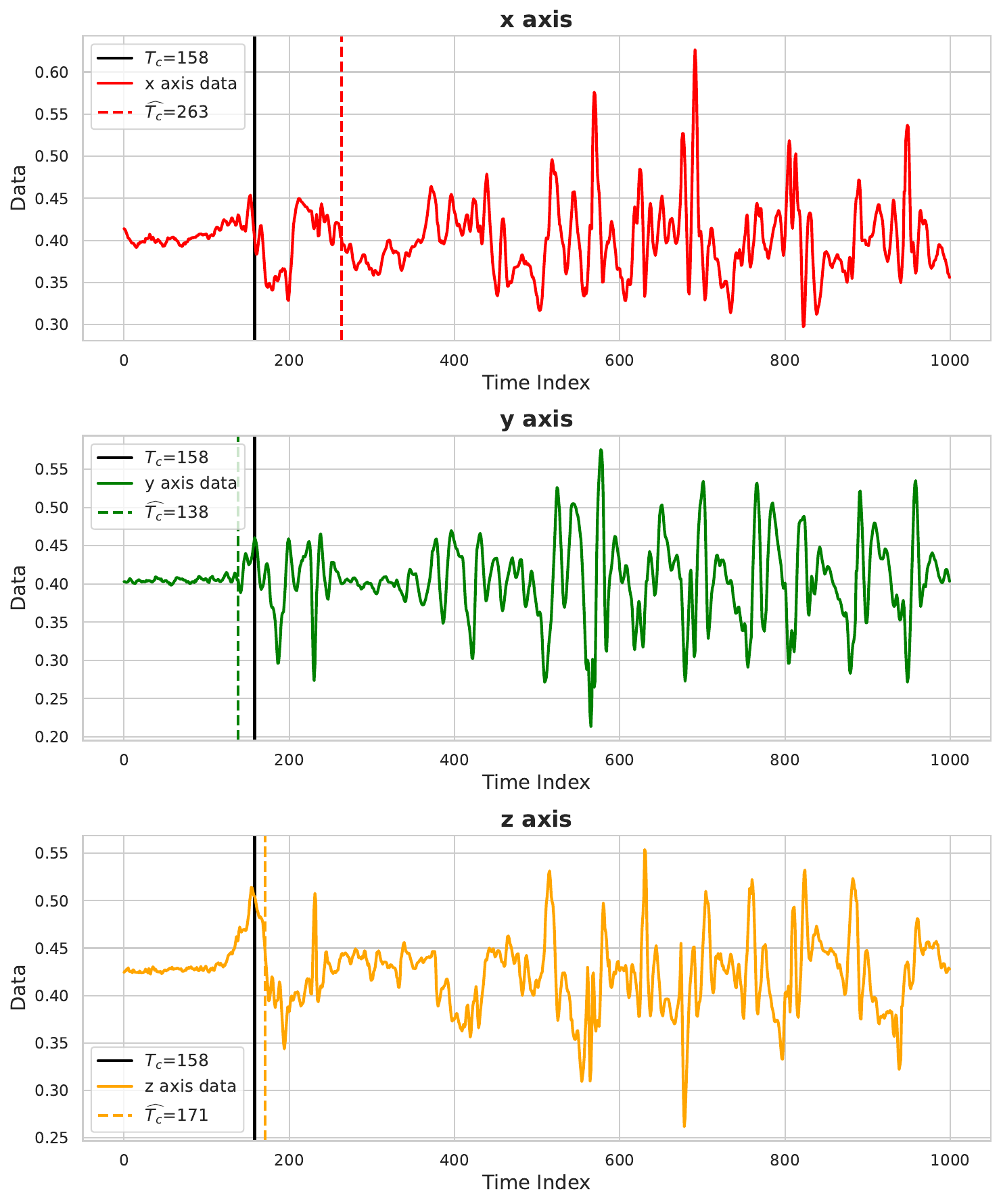}
    \caption{Causal-RuLSIF on human activity dataset. There are three subplots corresponding to the \(x\)-axis, \(y\)-axis, and \(z\)-axis, respectively. The black vertical line represents the true change point \(T^j_c\), while each colored vertical line indicates the estimated change point \(\widetilde{T}^j_c\). }
    \label{fig:human}
\end{figure}
\section{Limitations}\label{app:limitations}
Here we discuss the limitations of our proposed algorithm:

\textbf{Discrete-valued time series with infinite domain size and Continuous-valued time series}: 
If the domain size for the discrete data is large, a longer time series is required. Furthermore, the number of samples in each time series segment decreases exponentially as the domain size increases. This is a necessary trade-off if we want to avoid imposing model constraints on the distribution, such as linearity, and if we want to establish theoretical guarantees for non-IID time series data. The inherent limitation in sample efficiency arises because the conditional distribution is determined by the specific realization of a particular event.

Continuous-valued time series involves working with conditional probability density functions rather than conditional probability tables (CPTs). We have tried to extend this algorithm to probability density, but it is very challenging. For continuous-valued time series, the number of parent configurations is infinite, making it difficult to create a finite number of time series segments unless additional processing, such as categorizing the data or generating discrete analogs of a continuous distribution, is applied. 

\textbf{Sample Efficiency}: This limitation has been addressed to some extent, as discussed in Appendix~\ref{app:k-PC and Top-K}, but it remains a fundamental issue that we aim to explore in future work. In the current framework, without any special operation like k-PC and Top-K parents, the effective sample size of each time series segment, $|T_\text{sub}| \approx T/|s^{|\widehat{S\Pa}(X^j_t)|}|$, diminishes exponentially with domain size $s$ and parent size $\widehat{S\Pa}(X^j_t)$, in contrast to the baseline method, which maintains a sample size of $T$. The challenge lies in how to aggregate the information from each time series segment more efficiently.

\textbf{Smooth Causal Mechanism Change}:
With the \textit{Mechanism-shift} SCM, there is a constraint that two different mechanisms switch instantaneously. However, in some real-world scenarios, the causal mechanism shift may occur gradually over time.

\textbf{Causal Relations and Assumptions:} As mentioned in Section~\ref{app:case study}, before implementing our causal-driven algorithm, it is crucial to apply common sense or expert judgment to assess whether potential causal relationships exist. All the assumptions in Section~\ref{app:assumptions} should be met before using the algorithm.

\subsection{Experiments Under Assumption Violations}

In order to comprehensively evaluate the proposed algorithm under these limitations, we conducted experiments that violate Assumption \textbf{A6} Boundary Separation Assumption by placing change points near \( t = 1 \) and \( t = T \), specifically within 50 time points (Case B). We compared this with a standard setting (Case A), where change points are located within the interval \([50, T - 50]\), satisfying Assumption \textbf{A6}.

The algorithm's performance on continuous data after discretization varies depending on how the discretization process affects the underlying causal structure of the original data. To assess this, we also evaluated the algorithm on continuous data discretized using the 5-number summary (Case C).

The results are presented below. The table reports the average estimated error \(\frac{\widetilde{T}_c^j - T_c^j}{T_c^j}\) and its standard deviation over 50 random trials. In all experiments, we set \( T = 1500 \).

 \begin{table}[h]
\centering
\begin{tabular}{lcc|cc|cc}
\hline
 & \multicolumn{2}{c|}{Case A (normal)} & \multicolumn{2}{c|}{Case B (without A6)} & \multicolumn{2}{c}{Case C (discretization)} \\
 & Average & Std & Average & Std & Average & Std \\
\hline
 & 0.04 & 0.10 & 0.51 & 0.31 & 0.23 & 0.21 \\
\hline
\end{tabular}
\end{table}

\end{document}